\documentclass[nohyperref]{article}

\usepackage{graphicx}
\usepackage{booktabs}

\usepackage{hyperref}

\usepackage[accepted]{icml2022}

\usepackage{subcaption}
\usepackage{graphicx}
\usepackage{amsmath,amsfonts,amsthm} 
\usepackage{mathtools}
\usepackage{enumitem}
\usepackage{multirow}
\usepackage[capitalize,noabbrev]{cleveref}

\theoremstyle{plain}
\newtheorem{theorem}{Theorem}[section]
\newtheorem{proposition}[theorem]{Proposition}
\newtheorem{lemma}[theorem]{Lemma}
\newtheorem{corollary}[theorem]{Corollary}
\theoremstyle{definition}
\newtheorem{definition}[theorem]{Definition}
\newtheorem{assumption}[theorem]{Assumption}

\usepackage[textsize=tiny]{todonotes}

\input{definitions}

\icmltitlerunning{Linear Bandit Algorithms with Sublinear Time Complexity}

\begin{document}

\twocolumn[
\icmltitle{Linear Bandit Algorithms with Sublinear Time Complexity}

\icmlsetsymbol{equal}{*}

\begin{icmlauthorlist}
\icmlauthor{Shuo Yang}{utcs}
\icmlauthor{Tongzheng Ren}{utcs}
\icmlauthor{Sanjay Shakkottai}{utee}
\icmlauthor{Eric Price}{utcs}
\icmlauthor{Inderjit S. Dhillon}{utcs}
\icmlauthor{Sujay Sanghavi}{utee}
\end{icmlauthorlist}

\icmlaffiliation{utcs}{Department of CS, The University of Texas at Austin, TX, USA.}
\icmlaffiliation{utee}{Department of ECE, The University of Texas at Austin, TX, USA.}

\icmlcorrespondingauthor{Shuo Yang}{yangshuo\_ut@utexas.edu}

\icmlkeywords{Bandits, Sublinear Time}

\vskip 0.3in
]

\printAffiliationsAndNotice{}

\begin{abstract}

We propose two linear bandits algorithms with per-step complexity sublinear in the number of arms $K$. The algorithms are designed for applications where the arm set is extremely large and slowly changing. Our key realization is that choosing an arm reduces to a maximum inner product search (MIPS) problem, which can be solved approximately without breaking regret guarantees. Existing approximate MIPS solvers run in sublinear time. We extend those solvers and present theoretical guarantees for online learning problems, where adaptivity (i.e., a later step depends on the feedback in previous steps) becomes a unique challenge. We then explicitly characterize the tradeoff between the per-step complexity and regret. For sufficiently large $K$, our algorithms have sublinear per-step complexity and $\widetilde O(\sqrt{T})$ regret. Empirically, we evaluate our proposed algorithms in a synthetic environment and a real-world online movie recommendation problem. Our proposed algorithms can deliver a more than 72 times speedup compared to the linear time baselines while retaining similar regret.

\end{abstract}

\section{Introduction}

Linear bandits problem is one of the most fundamental online learning problems, with wide applications in recommender systems, online advertisements, etc. \citep{deshpande2012linear}. Such applications usually have an extremely large set of items (e.g., millions of products to be recommended), which also changes over time. Specifically, we focus on two types of changes: (1) some new arms are added from time to time (e.g., new movies added to the database); and more generally (2) some new arms are added, and some old arms deleted (e.g., some new advertisements to be shown and some old ones expired). Such an extremely large arm set typically changes slowly, in the sense that a relatively small number of arms are added or deleted at every time step.

A linear scan is slow for an extremely large arm set. It is thus demanding to design linear bandit algorithms that have per-step time complexity sublinear in the number of arms $K$, for an extremely large and slowly changing arm set. 

Common algorithms for linear bandits have per-step time complexity linear in $K$. For instance, Thompson Sampling (TS) draws a random parameter estimate and selects the best arm accordingly \citep{abeille2017linear}. It needs to scan the entire set of arms to choose the most promising arm, which leads to time complexity linear in $K$.

In this paper, we propose two algorithms with per-step time complexity sublinear in $K$, based on the observation below:

\textit{\textbf{Key observation: }The arm selection step in many linear bandits algorithms reduces to an (exact) maximum inner product search (MIPS) problem. The right way to approximately solve the MIPS problem, coupled with careful analysis, allows us to achieve sublinear per-step complexity and desired regret guarantees.}

Formally, given a set $P\in \RR^{d}, \abs{P} = K$, and a query $q\in \RR^{d}$, the MIPS problem aims to find the point $p\in P$ that maximizes $p^{\top}q$. The TS algorithm is an immediate example of selecting arms by solving a MIPS problem. For arms $a$ with embedding $x_{a}$, TS algorithm chooses the arm that maximizes $x_{a}^{\top}\widetilde \theta$, for the random $\widetilde \theta$ drawn by TS.

More importantly, the exact solution of the MIPS problem is not necessary for obtaining an $O(\sqrt{T})$ regret bound. Take TS algorithm again as an example, the estimate $\widetilde \theta$ has an estimation error (i.e., $\widetilde \theta \neq \theta^*$, where $\theta^*$ is the true environment parameter that determines reward expectation). By properly controlling the approximate MIPS accuracy, the error of approximately solving MIPS can be smaller than the estimation error of $\widetilde \theta$. The regret will therefore stay in the same order as solving the MIPS exactly.

Many approaches were previously established to approximately solve MIPS with time complexity sublinear in $K$. While it seems promising to adopt those approximate MIPS solvers, there are still two challenges remaining:

\textit{\textbf{Challenge 1.} How to design (and analyze) approximate MIPS solvers for a sequence of adaptive queries?} Queries are adaptive (i.e., later queries depend on the results of previous ones) for online learning problems. Existing probabilistic guarantees for approximate MIPS solvers do not allow the queries to be adaptive. In this paper, we provide an alternative scheme where a query is first rounded to the nearest point in an $\epsilon$-net before sending to the MIPS solver. While this scheme is less accurate for a single query, it allows for a better success guarantee when applied to an adaptive sequence of $T$ queries.

\textit{\textbf{Challenge 2.} How to characterize the connection between per-step time complexity and regret?} Intuitively, a faster approximate MIPS solver is less accurate and thus leads to larger regret, while an exact MIPS solver enjoys an optimal regret but spends much more time. This tradeoff has not been characterized. For the two algorithms in this paper, we characterize this tradeoff, and furthermore, show that it allows for $O(K^{1-\alpha(T)})$ per-step complexity for some $\alpha(T) > 0$ while retaining $\widetilde O(\sqrt{T})$ regret.

As a summary, our \textbf{main contributions} are

\begin{enumerate}[leftmargin=*, itemsep=0.5ex]
  
  \item We formally define the $(c, r, \epsilon)$-MIPS problem (\Cref{def:mips}), and propose a scheme to approximately solve MIPS for a sequence of adaptive queries (\Cref{alg:mips-adaptive}). In \Cref{thm:near-linear-MIPS}, we show that our proposed algorithm has $K^{1+o(1)}$ preprocessing time complexity, $K^{\rho_q + o(\log^{-0.45}K)}$ query time complexity, with $\rho_q < 1$, and $K^{o(1)}$ time complexity for adding a new arm.
  
  \item Building upon \Cref{alg:mips-adaptive}, we propose a sublinear time elimination-based algorithm (\Cref{alg:sublinear_elimination}) and a sublinear time TS-based algorithm (\Cref{alg:accelerated-linTS}). We characterize the tradeoff between the time complexity and regret (\Cref{thm:sublinear_elimination,thm:acc-linTS}). With a proper choice of parameters and sufficiently large $K$, one can obtain $\widetilde O(\sqrt{T})$ regret and sublinear per-step time complexity.

  \item We evaluate our algorithms in a synthetic environment and a real-world movie recommendation problem. Compared with the linear time complexity baselines, our algorithms can offer a 72 times speedup when there are 100,000 arms while obtaining similar regret.

\end{enumerate} 

\section{Related Work}\label{sec:related_work}

\textbf{Linear bandits.} Two popular lines of approaches have been proposed for linear bandits: UCB-based and TS-based algorithms. The UCB-based algorithm chooses the arm with the largest plausible (according to the upper confidence bound) expected reward. The first algorithm was proposed by \citet{auer2002using} under the name SupLinRel, and extended by \citet{chu2011contextual} to be SupLinUCB. The algorithms maintain a confidence interval estimation, and eliminate the arms stage-by-stage. Subsequently, \citet{abbasi2011improved} presented an improved confidence bound construction and proposed the OFUL algorithm. It achieves $O(d \sqrt{T} \log T)$ regret bound, which nearly matches the information-theoretic lower bound $\Omega(d \sqrt{T})$ \citep{dani2008stochastic} up-to logarithmic factors.

TS algorithms maintain a posterior distribution of the environment parameter, and sampling from the posterior to determine the best arm.
There is now a rich literature on both 
Bayesian  \citep{russo2014learning,russo2016information} and frequentist \citep{KaKoMu12,agrawal2013thompson,pmlr-v32-gopalan14,abeille2017linear} regret bounds.
Our work is based on the frequentist analysis for linear Thomson Sampling, introduced in  \citep{abeille2017linear}.
For an arm set $\Acal$ with $K$ arms, all previously mentioned algorithms have a $\Theta(K)$ per-step time complexity.

There are previous algorithms that achieve sublinear in $K$ complexity, but do not fit into our setting. \citep{todd2016minimum,lattimore2020learning} show that the ``optimal design" approach has constant per-step complexity, but does not work for a changing arm set. \citep{liau2018stochastic} solves the multi-arm bandits problem with constant per-step complexity and constant space complexity, but the approach does not extend to the linear bandits problem.

\citet{jun2017scalable} considered accelerating a TS and a modified UCB algorithm to have $\widetilde O(K^{\rho})$ per-step time complexity, with $\rho = 1 - o(1)$. Their proposed algorithms, however, need $\Omega(K^{1 + \rho} T)$ time in preprocessing, as they need to build a MIPS solver for each of the steps in $T$ to deal with adaptive queries. There is much room to improve on the near quadratic dependency on $K$. 

\textbf{Max inner product search (MIPS).} There has been a large volume of work on (approximately) solving MIPS \citep{teflioudi2015lemp,shen2015learning,guo2016quantization,li2017fexipro,yu2017greedy,morozov2018non,abuzaid2019index,ding2019fast,tan2019efficient,zhou2019mobius}. It has also been demonstrated that MIPS can be applied to various problems for acceleration, e.g., quadratic regression \citep{yang2019interaction}, conditional gradient methods \citep{xu2021breaking}, sparsification problems \citep{song2022speeding}, reinforcement learning \citep{shrivastava2021sublinear}, and deep learning \citep{spring2017scalable,chen2019slide,chen2019fast,kitaev2020reformer,chen2020mongoose,song2021does,song2021training}.

For our theoretical analysis, we focus on reducing MIPS to the nearest neighbor search (NNS) problem, where various reductions have been previously proposed \citep{shrivastava2014asymmetric,bachrach2014speeding,neyshabur2015symmetric,keivani2018improved}. We then solve the NNS by Locality Sensitive Hashing (LSH) \citep{andoni2006near,har2012approximate,andoni2018approximate,yan2018norm}, for its rigorous theoretical guarantee on sublinear query time. For our experiments, we use HNSW \citep{malkov2018efficient} for its outstanding empirical performance. 

\section{MIPS Solver for Adaptive Queries}\label{sec:mips}

We start by formally defining the Maximum Inner Product Search (MIPS) problem. Subsequently, we define adaptive queries and show how it breaks existing MIPS solvers. We then propose our solution to adaptive queries, which can convert existing MIPS solvers to work for adaptive queries.

\subsection{MIPS Problem and Sublinear Time Solver}

\begin{definition}[$(c, r, \epsilon)$-MIPS problem]\label{def:mips}
  Let $P\subseteq \RR^d$ be a finite set of points with $\norm{p}_2 \le 1, \forall p\in P$. Let $q \in \RR^d$ be the query with $\norm{q}_2 \le 1$. The $(c, r, \epsilon)$-approximated max inner product search ($(c, r, \epsilon)$-MIPS) aims to find $p\in P$ such that $\inner{q}{p} \ge cr - \epsilon$ if there exists $p^*\in P$ with $\inner{q}{p^*} \ge r + \epsilon$.
\end{definition}

The definition is valid with $r > 0, c \le 1, \epsilon \ge 0$. Intuitively, for any query $q$ with unit norm, the $(c, r, \epsilon)$-MIPS problem defined above looks for a point $p\in P$ with $\inner{p}{q}\ge r$, allowing for $(1-c)$ multiplicative error and $\epsilon$ additive error. See \Cref{fig:illustration} for illustration.

Approximately solving the MIPS problem with sublinear time has been well studied. The next result is adapted from \citep{andoni2017optimal}, which solves $(c, r, 0)$-MIPS in sublinear time with a success probability of at least 0.9.

\begin{proposition}[Single Query MIPS solver $\Scal(c, r, 0)$]\label{prop:non-adaptive}
  For a point set $P\subseteq \RR^d$ with $K$ points, there exists a data structure $\Scal(c, r, 0)$ that solves $(c, r, 0)$-MIPS problem for an arbitrary query $q$ with at least 0.9 probability. It has the following time complexity: \textbf{Preprocessing:} $K^{1+o(1)}$; \textbf{Add a Point to $P$:} $K^{o(1)}$; \textbf{Query:} $K^{\rho_q + o\rbr{\log^{-0.45}K}}$, where $\rho_q = \frac{4c'^2}{(1+c'^2)^2}$ and $c' = \sqrt{\frac{3 - cr}{3 -r}}$.
\end{proposition}

Notice that for $c < 1$, we have $c' > 1$ and $\rho_q < 1$. 

The online nature of linear bandits calls for a MIPS algorithm that can deal with a sequence of \textit{\textbf{adaptive queries}, where the later queries \textbf{depend on previous query results}}.

Such adaptive queries naturally arise when applying a MIPS solver $\Scal$ to online learning problems - as will be discussed in later sections, one can query $\Scal$ with the current parameter estimate $\widetilde\theta_t$ and $\Scal$ returns an arm $a_t$ that should be played. The query $\widetilde\theta_t$ depends on all previously played arms $a_\tau, \tau < t$, which are the results of previous queries.

As we illustrate in the next subsection, the adaptive queries introduce a fundamental challenge that one can not apply union bound to extend the probabilistic guarantee for one query to a sequence of adaptive queries.

\begin{figure}
    \includegraphics[clip, trim={120 200 350 150},width=\linewidth]{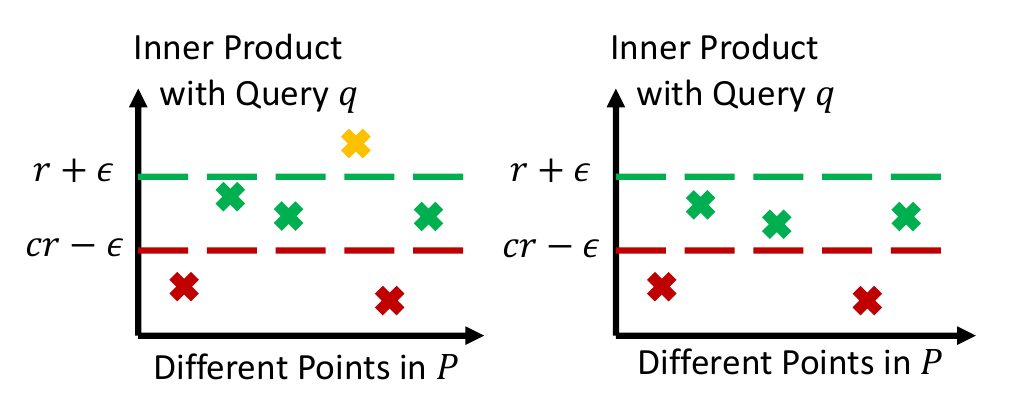}
    \caption{ For query $q$, if there exists $p^* \in P$ that has inner product $\inner{p}{q} \ge r+\epsilon$ (i.e., the yellow point), then the algorithm should return a point $p\in P$ with $\inner{q}{p}\ge cr - \epsilon$ (i.e. green or yellow points in the left figure). Otherwise, no point needs to be returned (i.e., no point needs to be returned for the right-hand side figure).}
    \label{fig:illustration}
    \vspace{-2em}
\end{figure}

\subsection{Hardness of Adaptive Queries}

To see how adaptive queries break union bound, consider the following example. 

\textbf{A Thought Experiment:} A black-box $\Bcal$ has a unit norm vector $p \in \RR^{10}$, drawn uniformly at random when $\Bcal$ is initialized. An agent $\Ccal$ can send unit norm query $q \in \RR^{10}$ to $\Bcal$ and $\Bcal$ returns a scalar $\inner{q}{p}$. Suppose that the agent $\Ccal$ can send 11 queries $q_1, \cdots, q_{11}$ and its goal is to send a query $q^*$ with $\inner{q^*}{p} = 1$. 

For a single query $q$, it is probability $0$ that $q = q^*$, as $p$ is drawn uniformly at random. What is the probability that $\Ccal$ can send such a query $q^*$ within the 11 queries?

\textbf{Adaptive v.s. Non-adaptive:} Consider the two settings - \textit{(1) Non-adaptive queries:} $q_1, \cdots, q_{11}$ can have arbitrary dependency on other queries, but can not depend on any of the results that $\Bcal$ returns; and \textit{(2) Adaptive queries:} a later query $q_i$ can be constructed based on previous queries' result: $\inner{q_j}{p}, j < i$. 

For non-adaptive queries, each $q_i$ has probability 0 to be $q^*$, and thus by union bound, it is probability 0 that $\Ccal$ sends $q^*$ within the 11 queries.

For adaptive queries, $\Ccal$ can first send 10 linearly independent queries $q_1, \cdots, q_{10}$. With the results returned from $\Bcal$, it can solve for $p$ exactly, and send $q_{11} = p$ which gives $\inner{q_{11}}{p} = 1$. Therefore, by allowing the queries to be adaptive, $\Ccal$ can send $q^*$ with probability 1. The drastic difference between probability 0 and probability 1 demonstrates the unique challenge of adaptive queries.

The thought experiment above shows that the probabilistic guarantee for one query cannot be extended to a sequence of adaptive queries via union bound. In the next subsection, we propose a scheme that builds upon $\Scal(c, r, 0)$ and solves MIPS for adaptive queries.

\subsection{MIPS for Adaptive Queries}

The key to solving MIPS with adaptive queries is to discretize the unit $\ell_2$ ball $Q$ (which contains all possible queries) into an $\epsilon$-net $\widehat Q$ and use multiple independent $\Scal(c, r, 0)$ to give correct answers for all queries in $\widehat Q$. 

For a query $q \in Q$, we first round $q$ to its nearest neighbor $\widehat q \in \widehat Q$, which is at most $\epsilon$ away. We then query $\widehat q$ to multiple $\Scal_i(c, r, 0)$ with $i \in [\kappa]$, and return a correct result from any of the $\Scal_i(c, r, 0)$ as the result for query $q$. \Cref{fig:eps_net} is an illustration for such process, and shows that it solves $(c, r, \epsilon)$-MIPS problem. \Cref{alg:mips-adaptive} presents the pseudocode, which we will later refer as $\Mcal(c, r, \epsilon, \delta)$.

\begin{figure}[h]
    \includegraphics[clip, trim={250 150 250 150},width=\linewidth]{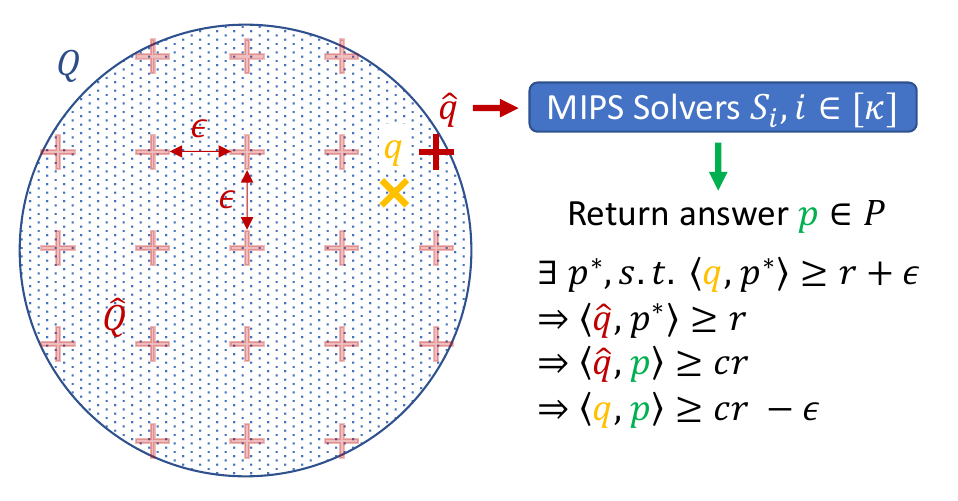}
    \caption{Illustration for \Cref{alg:mips-adaptive}. The blue circle represents the continuous set $Q$ which contains all possible queries, and $\widehat Q$ is an $\epsilon$-net in $Q$. For a query $q\in Q$, it is first rounded to $\widehat q \in \widehat Q$. Then the MIPS solvers $\Scal_{i}(c, r, 0), i\in [\kappa]$ are invoked to answer $\widehat q$. Suppose $\exists p^* \in P, s.t. \inner{q}{p^*} \ge r + \epsilon$ and a point $p\in P$ is returned by some $\Scal_i$. We have $\inner{q}{p} \ge cr - \epsilon$ as indicated by the figure. Thus \Cref{alg:mips-adaptive} solves $(c, r, \epsilon)$-MIPS.}
    \label{fig:eps_net}
\end{figure}

Our next result shows that with $\kappa\triangleq d\log \rbr{\frac{K d}{\epsilon\delta}}$ independent single query MIPS solvers $\Scal(c, r, 0)$, we can construct a MIPS solver $\Mcal(c, r, \epsilon, \delta)$ that gives correct answer for $(c, r, 0)$-MIPS problem for all queries in $\widehat Q$ with probability at least $\delta$. It therefore solves $(c, r, \epsilon)$-MIPS problem for an arbitrary sequence of queries (possibly adaptive) from $Q$. Coupled with \Cref{prop:non-adaptive}, our next result presents the sublinear in $K$ time complexity of \Cref{alg:mips-adaptive}.

\begin{algorithm}[t]
\centering 
\begin{algorithmic}[1]
\STATE \textbf{Preprocess}
\STATE \textbf{Input:} set of points $P \subseteq \RR^d$, parameter $(c, r, \epsilon)$ of the MIPS problem (see \Cref{def:mips}), desired failure probability bound $\delta$
\STATE Set $\kappa = {d\log \rbr{\frac{K d }{\epsilon\delta}}} $
\STATE Construct $\kappa$ non-adaptive $(c, r, 0)$-MIPS solvers $\Scal_i, i\in [\kappa]$ for the set of points $P$  (\Cref{prop:non-adaptive})
\\\hrulefill
\STATE \textbf{Add a new point $p$ to $P$}
\STATE \textbf{Input:} a new point $p \in \RR^d$
\STATE For all $\Scal_i, i \in [\kappa]$, add $p$ to $\Scal_i$
\\\hrulefill
\STATE \textbf{Query}
\STATE \textbf{Input:} query $q \in \RR^d$
\STATE Round the query $q$ to the nearest point $\widehat q$, whose coordinates are all multiples of $\frac{\epsilon}{d}$
\STATE Query all non-adaptive MIPS solvers $\cbr{\Scal_{i}}_{i \in [\kappa]}$ with $\widehat q$
\STATE \textbf{Return:} any point $p\in P$ returned by any of $\cbr{\Scal_{i}}_{i \in [\kappa]}$, otherwise return null
\end{algorithmic}
\caption{\textsc{Adaptive MIPS Solver $\Mcal(c, r, \epsilon, \delta)$} }
\label{alg:mips-adaptive}
\end{algorithm}

\begin{theorem}[Adaptive MIPS solver $\Mcal(c, r, \epsilon, \delta)$]\label{thm:near-linear-MIPS}
For a point set $P\in \RR^d$ with $K$ points, there exists a data structure $\Mcal(c, r, \epsilon, \delta)$ (\Cref{alg:mips-adaptive}) that sovles $(c, r, \epsilon)$-MIPS problem correctly for arbitrary (possibly adaptive) queries with at least $(1 - \delta)$ probability, for any $\delta \in (0,1)$. It has the following time complexity: \textbf{Preprocessing:} $\kappa K^{1+o(1)}$; \textbf{Add a Point to $P$:} $\kappa K^{o(1)}$; \textbf{Query:} $\kappa K^{\rho_q +o\rbr{\log^{-0.45}K}}$, where $\rho_q = \frac{4c'^2}{(1+c'^2)^2}$ and $c' = \sqrt{\frac{3 - cr}{3 -r}}$.
\end{theorem}

Here we illustrate how one can use multiple instances of $\Scal_i(c, r, 0)$ to answer all queries in $\widehat Q$ correctly. Note that the only failure case of $\Scal_i(c, r, 0)$ is when there exists $p^{*}\in P$ such that $\inner{q}{p^{*}} \ge r$ and it fails to return any $p$. This is because we can avoid spurious answer $p$ with a simple sanity check of $\inner{q}{p} \ge cr$. Therefore, outputting a point $p$ is an indicator of success, which allows for using multiple $\Scal_i$ to construct another one with a higher success probability.

\section{Linear Bandits Problem Setup}\label{sec:bandits_setup}

We first introduce the \textbf{\textit{extremely large and slowly changing}} linear bandits problem setting. Let $\Acal$ be the set of all arms, where each of the arm $a\in\Acal$ has a feature vector $x_{a} \in \RR^d$.

The setting is called \textit{extremely large} as we focus on the regime where $\Acal$ is extremely large while time horizon $T$ is moderate (e.g., $T = \Theta(\log^\gamma K )$ for some constant $\gamma$).

The arm set $\Acal$ can \textit{change} in two ways: at each time step $t$ (1) there is a set of new arms $\Acal_{new}$ included into the arm set $\Acal$, but no deletions from $\Acal$; or more generally (2) there are some new arms $\Acal_{new}$ added, and some old arms in $\Acal$ deleted.
We use $K$ to denote the maximum size of $\Acal$, and our goal is to achieve per-step complexity sublinear in $K$.

Further, the arm set $\Acal$ changes \textit{slowly} in the sense that, at every time step, there is at most $C_{change}$ additions and deletions. For simplicity, we assume $C_{change}$ to be a constant in the rest of our paper. Our results, however, are valid for any $C_{change} = O(K^{\gamma})$ for any constant $\gamma < 1$.

At time step $t$, the online learner plays an arm $a_t \in \Acal$, and observes the reward $r_{t}$. We adopt the following commonly used assumptions:

\begin{assumption}[Linear Realizability]\label{as:1}
  $\exists\theta^{*}\in\RR^{d}$, such that $r_{t} = \inner{\theta^{*}}{x_{a_{t}}}+\eta_{t} $, where $\eta_{t}$ is a mean 0 noise.
\end{assumption}

\begin{assumption}[Subgaussian Noise]\label{as:2}
 The noise satisfies,
\begin{align*}
  \EE\sbr{\exp\rbr{\alpha\eta_{t}}\Bigm\vert \Fcal_{t-1}} \le \exp\rbr{\frac{\alpha^{2}}{2}}, \forall \alpha\in\RR, \forall t \in [T],
\end{align*}
with the filtration $\Fcal_{t - 1} = \sigma\rbr{a_{1}, r_{1}, \cdots, a_{t-1}, r_{t-1}, a_{t}}$.
\end{assumption}

\begin{assumption}[Bounded Parameters]\label{as:3}
  We assume that $\norm{\theta^{*}}_2 \le 1$ and $\norm{x_{a}}_2\le 1, \forall a\in\Acal$.
\end{assumption}

The regret is defined as $R(T) \coloneqq \sum_{t=1}^{T}x_{a_t^*}^\top\theta^{*} - x_{a_t}^\top\theta^{*}$, where $a_t^* \coloneqq \argmax_{a\in \Acal} x_a^\top \theta^{*}$ is the optimal arm at time step $t$. The goal of the online learner is to minimize the regret $R(T)$.

\section{Sublinear Time Elimination Algorithm}\label{sec:sublinear_elim}

In this section, we focus on an arm set $\Acal$ that keeps growing and no arm is deleted. We present an elimination-based algorithm that achieves sublinear per-step complexity. Intuitively, we adopt the MIPS solver to choose the arm with approximately the highest uncertainty in $o(K)$ time. The elimination-based algorithm is additionally faster in later stages, as many arms are eliminated.

We can estimate $\theta^*$ with an online ridge regression,
\begin{align}\label{eq:theta_estimation}
  \widehat \theta_{t+1} = \rbr{X_{1:t}^{\top}X_{1:t} + I}^{-1}X_{1:t}^{\top}Y_{1:t},
\end{align}
where $X_{1:t}$ is the matrix whose rows are $x_{a_{1}}^{\top}, \cdots, x_{a_{t}}^{\top}$ and $Y_{1:t} = (r_{1}, \cdots, r_{t})$. As established in \citep{abbasi2011improved}, for any $\delta \in (0, 1)$, with probability at least $(1 - \delta)$, for all $t \ge 1$, we have $
    \norm{\widehat\theta_t - \theta^*}_{V_t} \le \beta\rbr{\delta}$,
with $V_{t} = I + \sum_{s=1}^{t-1}x_{s}x_{s}^{\top}$ and $\beta(\delta) = 1 + \sqrt{2\log\rbr{\frac{1}{\delta}} + d\log\rbr{1+\frac{T}{d}}}$.

In the standard linear bandits setting, the arm set $\Acal$ is fixed and does not grow over time. An elimination-based algorithm typically selects the arm $a$ with the highest uncertainty, measured by $\norm{x_a}_{V_t^{-1}}$, and periodically eliminates the bad arms (i.e. the arms with $x_{a}^{\top}\widehat \theta_{t} + \beta(\delta)\norm{x_{a}}_{V_{t}^{-1}}$ smaller than $\underline{r}\triangleq\max_a x_{a}^{\top}\widehat \theta_{t} - \beta(\delta)\norm{x_{a}}_{V_{t}^{-1}}$). After elimination, any remaining arm $a$ costs at most $C\cdot \beta(\delta)\max\norm{x_a}_{V_t^{-1}}$ regret, whose summation over $T$ can be controlled by existing results.

Notice that the elimination requires a scan through all the arms. It is thus an $\Theta(K)$ time operation, which we do not hope to pay per-step. A common choice is to adopt stagewise elimination -- initializing $s = 1$ and eliminating when the uncertainty $\beta(\delta)\norm{x_a}_{V_t^{-1}}$ of all arms falls below $2^{-s}$, then increment $s$ by 1. The elimination therefore only happens $\log T$ times. In the next subsection, however, we show that such a simple strategy fails when $\Acal$ keeps growing.

\subsection{Efficient Elimination with Heap}

Elimination is necessary every time when $\Acal$ grows. As new arms $\Acal_{new}$ coming, the elimination threshold $\underline{r}\triangleq\max_a x_{a}^{\top}\widehat \theta_{t} - \beta(\delta)\norm{x_{a}}_{V_{t}^{-1}}$ might significantly increase. This typically happens when $\Acal_{new}$ contains an arm that is much better than the previously optimal arm. When $\underline{r}$ increases, some of arms that were not previously eliminated should be eliminated -- otherwise they might still be selected according to the criterion $\argmax_a \norm{x_a}_{V_t^{-1}}$ but incurring a regret much larger than $C \cdot \beta(\delta)\norm{x_a}_{V_t^{-1}}$, which possibly leads to an unbounded regret.

The necessity to eliminate arms according to the newly added arms calls for a more carefully designed data structure, which supports incremental elimination but avoids linear scanning through all arms $\Acal$.

Our solution is presented in \Cref{alg:enhanced_arm_set}, which partitions the arm set $\Acal$ into sets $\Psi_s$. The arms reside in $\Psi_s$ all have uncertainty $\beta(\delta)\norm{x_a}_{V_t^{-1}}$ smaller than $2^{-s}$.

More importantly, the arm set $\Psi_s$ is augmented with a min heap $\Hcal_s$, which stores arm $a$ indexed by $x_a^\top \widehat \theta + 2^{-s}$. Whenever a larger $\underline{r}$ appears, $\Psi_s$ can quickly compare the heap top $x_a^\top \widehat \theta + 2^{-s}$ with $\underline{r}$ and eliminates the arm $a$ as necessary. This avoids the linear scan for $\Acal$ when the elimination threshold $\underline{r}$ changes with the newly added arms $\Acal_{new}$.

An important implication is that after elimination (line 9 -- 14 of \Cref{alg:enhanced_arm_set}), playing an arm $a$ with the (approximately) largest uncertainty, the regret is again bounded by $C \cdot \beta(\delta)\norm{x_a}_{V_t^{-1}}$. Formally, at time step $t$, let $s_t$ be the minimum $s$ such that $\Psi_s$ is not empty, we have:
\begin{lemma}\label{lemma:bounded_regret}
For all $a \in \Psi_{s_t}$, $x_{a_t^*}^\top\theta^* - x_a^\top\theta^* \le 4\cdot 2^{-s_t}.$
\end{lemma}
The approximate MIPS query step (line 15 -- 18 of \Cref{alg:enhanced_arm_set}) can upper bound $2^{-s_t}$ by $16\cdot\beta(\delta)\norm{x_{a_t}}_{V_t^{-1}}$, up to some approximation error. It, therefore, retains the original regret guarantee (by following existing bounds on the summation of $\norm{x_{a_t}}_{V_t^{-1}}$ over $t$), without linearly scanning the arm set $\Acal$ at every step.

\begin{algorithm}[t]
\centering 
\begin{algorithmic}[1]
\STATE \textbf{Initialize}
\STATE \textbf{Input:} stage index $s$, parameters $d, \beta, \eta, \delta_\Psi$
\STATE Initialize an {adaptive MIPS solver} $\Mcal_{s}$ with $\rbr{\frac{1}{4}, \frac{2^{-2s-2}(1 - \eta^2)}{d^{2}\beta^{2}}, \frac{2^{-2s-2}\eta^{2}}{d^{2}\beta^{2}}, \delta_{\Psi}}$
\STATE Initialize an empty min heap $\Hcal_{s}$
\\\hrulefill
\STATE \textbf{Add}
\STATE \textbf{Input:} arm $a$, parameter estimate $\widehat\theta$
\STATE Add point $vec\rbr{{x_{a}x_{a}^{\top}}/{d}}$ to $\Mcal_{s}$
\STATE Push $(x_{a}^{\top}\widehat\theta + 2^{-s}, a)$ to heap $\Hcal_{s}$, using scalar $(x_{a}^{\top}\widehat\theta+2^{-s})$ for ordering
\\\hrulefill
\STATE \textbf{Eliminate}
\STATE \textbf{Input:} new elimination threshold $\underline r$
\WHILE {Heap $\Hcal_{s}$ top is smaller than $\underline r$}
\STATE $v, a = \Hcal_{s}\text{.pop()}$
\STATE Delete arm $a$ from $\Mcal_{s}$
\ENDWHILE
\\\hrulefill
\STATE \textbf{Query}
\STATE \textbf{Input:} $V \in \RR^{d\times d}$
\STATE Query $\Mcal_{s}$ with $vec(V / d)$, denote the $\Mcal_{s}$ output as $a$
\STATE \textbf{Return:} $a$ if $a$ is not null; otherwise return null
\end{algorithmic}
\caption{\textsc{Heap Augmented Arm Set $\Psi_{s}$} }
\label{alg:enhanced_arm_set}
\end{algorithm}

\subsection{Algorithm and Its Regret, Time Complexity}

Here we present the sublinear time elimination-based algorithm, and its regret and time complexity.

\begin{algorithm}[!t]
\centering 
\begin{algorithmic}[1]
\STATE \textbf{Input:} arm set $\Acal$, time horizon $T$, desired failure probability bound $\delta$, desired accuracy $\eta(T)$
\STATE Initialize $V_{1} = I, s = 1, \Acal_{1} = \Acal$
\STATE Set $\beta(\frac{\delta}{2}) = 1 + \sqrt{2\log \rbr{\frac{2}{\delta}} + d \log\rbr{1+\frac{T}{d}}}$
\STATE Set $s_{max}=\ceil{\log\frac{1}{8\eta(T)}}$, initialize $\Psi_{s}$ for $s \in [s_{max}]$ with $\rbr{s, d, \beta(\frac{\delta}{2}), \eta(T), \frac{\delta}{2 s_{max}}}$
\STATE Add all arms $a \in \Acal$ to $\Psi_{0}$
\FOR{$t = 1, 2, \cdots, T$}
\STATE \textcolor{blue}{/* Add new arms $\Acal_{new}$ */}
  \STATE For all $a \in \Acal_{new}$, add $a$ to $\Psi_{s}$ with $s = \min\rbr{\floor{-\log\rbr{\beta(\frac{\delta}{2})\norm{x_{a}}_{V_{t}^{-1}}}}, s_{max}}$
  \STATE Set $\underline  r' = \max_{a\in\Acal_{new}} \rbr{x_{a_{t}}^{\top}\widehat \theta_{t} - \beta(\frac{\delta}{2})\norm{x_{a}}_{V_{t}^{-1}}}$
  \IF {$\underline r' > \underline r$}
    \STATE Set $\underline r = \underline r'$. For all $s$, $\Psi_{s}$ eliminates arms with $\underline{r}$
  \ENDIF
  \STATE \textcolor{blue}{/* Choose an arm in $o(K)$ time */}
  \STATE Let $s_{t} = \argmin_s \abs{\Psi_s} > 0$
  \IF {$s_{t} = \ceil{\log \frac{1}{8\eta(T)}}$}
  \STATE Let $a_{t}$ be an random arm in $\Psi_{s_t}$
  \ELSE
  \STATE Let $a_{t}$ be the result of querying $\Psi_{s_t}$ with $V_{t}^{-1}$
  \WHILE {$a_{t}$ is null}
      \STATE Set $\underline  r' = \max_{a\in\Psi_{s_{t}}} \rbr{x_{a_{t}}^{\top}\widehat \theta_{t} - \beta(\frac{\delta}{2})\norm{x_{a}}_{V_{t}^{-1}}}$
    \STATE For all $a \in \Psi_{s_{t}}$, add $a$ to new set $\Psi_{s'}$ with $s' = \min\rbr{\floor{-\log \rbr{\beta(\frac{\delta}{2})\norm{x_{a}}_{V_{t}^{-1}}}}, s_{max}}$, and remove $a$ from $\Psi_{s_t}$
    \IF {$\underline r' > \underline r$}
      \STATE $\underline r = \underline r'$. For all $s$, $\Psi_{s}$ eliminate arms with $\underline{r}$
    \ENDIF
    \STATE Let $s_{t} = \argmin_s \abs{\Psi_s} > 0$
    \STATE Let $a_{t}$ be the result of querying $\Psi_{s_t}$ with $V_{t}^{-1}$
  \ENDWHILE
  \ENDIF
  \STATE Play arm $a_{t}$, observe reward $r_{t}$
  \STATE Update $V_{t+1} = V_{t} + x_{a_{t}}x_{a_{t}}^{\top}$
  \STATE Update $\widehat \theta_{t+1}$ according to \Cref{eq:theta_estimation}
\ENDFOR
\end{algorithmic}
\caption{\textsc{Sublinear Time Elimination}}
\label{alg:sublinear_elimination}
\end{algorithm}

The crux to achieve per-step $o(K)$ time complexity is twofold: (1) Selecting an arm that approximately has maximum uncertainty $\norm{x_a}_{V_t^{-1}} = \inner{V_{t}^{-1}}{x_{a}x_{a}^{\top}}$ is a MIPS problem. \Cref{alg:mips-adaptive} can solve it with sublinear time complexity; (2) The elimination (line 11 and line 23) uses \Cref{alg:enhanced_arm_set} as a sub-routine, and in total causes $K^{1+o(1)}$ complexity, which the algorithm does not need to pay per-step.

Running \Cref{alg:sublinear_elimination} for a linear bandits problem that satisfies Assumptions \ref{as:1} to \ref{as:3}, we have the following result for the regret and time complexity.

\begin{theorem}[Regret and time complexity of \Cref{alg:sublinear_elimination}, formal version see \Cref{thm:formal_sublinear_elimination}]\label{thm:sublinear_elimination}
For any $\delta \in (0, 1)$, with probability at least $1- \delta$, the regret is bounded by
  \begin{align*}
    R(T) = \widetilde O\rbr{d\sqrt{T} + \eta(T)\cdot T}
  ,\end{align*}
  with $\eta(T)$ controlling the approximate MIPS accuracy.
  
  The per-step time complexity is $K^{1 - \Theta(\frac{\eta(T)^4}{\log^2 T}) + o(\log^{-0.45}K)}$. The overall time complexity overhead (e.g., initialization) is $K^{1+o(1)}$.
\end{theorem}

$\eta(T)$ offers a trade-off between complexity and regret. The following corollaries show examples of choosing $\eta(T)$.

\begin{corollary}\label{coro:1}
  Given any $T$ that does not scale with $K$, one can choose $\eta(T) = \frac{1}{\sqrt{T}}$. The regret bound is  $\widetilde O(d\sqrt{T})$, while the per-step complexity is $K^{1 - \Theta(\frac{1}{T^2\log^2T})}$ for sufficiently large $K$. Note that this achieves per-step complexity sublinear in $K$ and retains the regret of $O(\sqrt{T})$.
\end{corollary}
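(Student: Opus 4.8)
The plan is to obtain this corollary as a direct specialization of \Cref{thm:acc-linucb} under the single choice $\eta(T) = \frac{1}{\sqrt{T}}$; no new analysis of \Cref{alg:accelerated-linucb} is needed—only substitution into the two bounds already established, followed by a short asymptotic argument in $K$.

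First I would dispatch the regret. Substituting $\eta(T) = T^{-1/2}$ into $R(T) \le 16\beta(\delta')\sqrt{Td\log(1+T/d)} + 40\eta(T)\cdot T$ collapses the additive term to $40\sqrt{T}$. Since $\beta(\delta') = 1 + \sqrt{2\log(1/\delta') + d\log(1+T/d)} = O(\sqrt{d\log T})$ when $\delta$ is taken to be a fixed constant (or inverse polynomial in $T$), the leading term is $16\cdot O(\sqrt{d\log T})\cdot\sqrt{Td\log(1+T/d)} = \widetilde O(d\sqrt{T})$. This dominates the $O(\sqrt{T})$ term, so $R(T) = \widetilde O(d\sqrt{T})$, matching the regret of OFUL up to logarithmic factors.

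Next I would treat the per-step complexity. Substituting $\eta(T)^4 = T^{-2}$ into the exponent $1 - \Theta\bigl(\frac{\eta(T)^4}{\log^2 T}\bigr) + O\bigl(\frac{1}{\sqrt{\log K}}\bigr)$ yields $1 - \Theta\bigl(\frac{1}{T^2\log^2 T}\bigr) + O\bigl(\frac{1}{\sqrt{\log K}}\bigr)$. The crucial observation is the interplay of the last two terms: because $T$ does not scale with $K$, the quantity $\Theta\bigl(\frac{1}{T^2\log^2 T}\bigr)$ is a \emph{fixed positive constant} independent of $K$, whereas $\frac{1}{\sqrt{\log K}}\to 0$ as $K\to\infty$. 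Hence there exists a threshold $K_0$ such that for all $K \ge K_0$ the $O\bigl(\frac{1}{\sqrt{\log K}}\bigr)$ overhead is at most half of that constant, leaving a net exponent bounded above by $1 - \Theta\bigl(\frac{1}{T^2\log^2 T}\bigr) < 1$. This gives the stated per-step complexity $K^{1 - \Theta(1/(T^2\log^2 T))}$, which is strictly sublinear.

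The main—and essentially the only—obstacle is making the ``sufficiently large $K$'' threshold precise: one must verify that the sublinear savings $\Theta(\eta(T)^4/\log^2 T)$ afforded by the controlled MIPS accuracy genuinely outweigh the $O(1/\sqrt{\log K})$ overhead intrinsic to the LSH-based query complexity of \Cref{alg:mips-adaptive}. Once $T$ is decoupled from $K$ this is immediate, but it also clarifies why the hypothesis is stated as it is: were $T$ permitted to grow with $K$, the constant $\Theta(1/(T^2\log^2 T))$ could vanish faster than $1/\sqrt{\log K}$, and the net exponent would no longer be provably below $1$.
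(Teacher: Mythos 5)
Your proposal is correct and follows exactly the route the paper intends: the corollary is a direct substitution of $\eta(T)=T^{-1/2}$ into the regret and per-step-complexity bounds of \Cref{thm:acc-linucb}, and the paper offers no further argument. Your additional care in showing that the fixed constant $\Theta(1/(T^2\log^2 T))$ eventually dominates the $O(1/\sqrt{\log K})$ overhead is precisely the content of the ``sufficiently large $K$'' qualifier, so nothing is missing.
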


\begin{corollary}\label{coro:2}
Consider the regime where $K$ is extremely large and $T = \Theta\rbr{\log^\gamma K}$ for some constant $\gamma$. Choosing $\eta(T) = T^{-\frac{0.1}{\gamma}}$, the regret bound is  $\widetilde O(T^{\frac{1}{2}} + T^{1 - \frac{0.1}{\gamma}})$, while the per-step complexity is $o(K)$. It shows that it is possible to achieve both sublinear regret and sublinear time complexity, for any large $K$ and moderate $T$.
\end{corollary}

One additional benefit of \Cref{alg:sublinear_elimination} is that the elimination typically removes many arms, which provides further speedup. Such speedup does not show up in the theoretical analysis as it depends on the distribution of arms. The acceleration brought by elimination is clearly presented in our empirical evaluation (\Cref{sec:experiment}).

Such additional speedup, however, comes with the price that the elimination-based algorithm can not handle deletions - as the remaining arms after elimination might get deleted from $\Acal$. In the next section, we present a sublinear time TS that allows for both additions and deletions.

\section{Sublinear Time TS-based Algorithm}\label{sec:acc_linTS}

In this section, we present a Thompson Sampling (TS) based algorithm with sublinear per-step time complexity. It works for the general arm set changing, where arms can be added to or deleted from $\Acal$. The TS-based algorithm also avoids paying the overhead for elimination (as required by \Cref{alg:sublinear_elimination}), and therefore after initialization, the time complexity for every time step is sublinear in $K$.

\subsection{Algorithm and its Regret, Time Complexity}

The linear TS algorithm \citep{abeille2017linear} maintains the estimation $\widehat \theta_{t}$ as \Cref{eq:theta_estimation}. At each time step $t$, a random $\widetilde \theta_{t}$ is constructed as $\widetilde \theta_{t} = \widehat \theta_{t} + \beta(\frac{\delta}{4T}) V_{t}^{-1/2}\xi_{t}$, with $\xi_{t}$ drawn from distribution $\Dcal^{TS}$, which satisfies \textit{concentration} and \textit{anti-concentration} properties (see \Cref{def:D-TS} in Appendix). For instance, $\Dcal^{TS}$ can simply be a spherical Gaussian distribution.

After $\widetilde \theta_{t}$ is constructed, the standard linear TS algorithm chooses the arm $a$ that maximizes $x_{a}^{\top}\widetilde \theta_{t}$. \Cref{alg:mips-adaptive} can be naturally applied to solve this MIPS for arm selection. See \Cref{alg:accelerated-linTS} for detail.

Notice that \Cref{alg:accelerated-linTS} assumes that the largest reward expectation is non-negative, as it is more commonly seen (e.g., when the reward corresponds to clicks, purchases, or ratings). When the largest reward expectation is negative, we propose the following extension: we can transform arm's feature $x$ to $\sbr{\frac{x}{\sqrt{2}}, \frac{\sqrt 2}{2}}$, observed reward $r_t$ to be $\frac{r_t}{2} + \frac{1}{2}$. The corresponding $\theta^*$ becomes $\sbr{\frac{\theta^*}{\sqrt 2}, \frac{\sqrt 2}{2}}$. In this way, the algorithm sees an environment with the largest reward expectation being positive and properly makes arm selection, while the true environment allows the largest reward expectation to be negative.

\begin{algorithm}[t]
\centering 
\begin{algorithmic}[1]
\STATE \textbf{Input:} arm set $\Acal$, time horizon $T$, desired failure probability bound $\delta$, desired accuracy $\eta(T)$
\STATE Set $\beta(\frac{\delta}{4T}) = 1 + \sqrt{2\log \rbr{\frac{4T}{\delta}} + d \log\rbr{\frac{d+T}{d}}}, V_{1} = I$
\STATE Preprocess $x_a, \forall a \in \Acal$ with \Cref{alg:mips-adaptive} with $\ceil{\frac{d}{\eta(T)}}$ independent copies. For the $i$-th copy $\Mcal_i$, use parameter $\rbr{1 - \frac{1}{i+1}, \frac{i \cdot \eta(T)}{d}, \frac{\eta(T)}{d}, \frac{\delta\cdot\eta(T)}{2d}}$
\STATE Add all arms $a\in \Acal$ to all $\Mcal_i$
\FOR{$t = 1, 2, \cdots, T$}
\STATE Add or delete the changing arms $a$ for all $\Mcal_s$ with $s \le \ceil{d/\eta(T)}$
\STATE Sample $\xi_{t} \sim \Dcal^{TS}$
\STATE Compute $\widetilde \theta_{t} = \widehat \theta_{t} + \beta(\frac{\delta}{4T}) V_{t}^{-1/2}\xi_{t}$
\STATE Query \Cref{alg:mips-adaptive} with ${\widetilde \theta_{t}}/{\norm{\widetilde\theta_t}}$ and different $m$, set $a_{t}$ to be the non-null result with largest $m$
\STATE Play arm $a_{t}$ and observe reward $r_{t}$
\STATE Update $V_{t+1} = V_{t} + x_{t}x_{t}^{\top}$
\STATE Update $\widehat \theta_{t+1}$ according to \Cref{eq:theta_estimation}
\ENDFOR
\end{algorithmic}
\caption{\textsc{Sublinear Time Thompson Sampling} }
\label{alg:accelerated-linTS}
\end{algorithm}

Under \Cref{as:1} to \ref{as:3}, we can characterize the regret and time complexity of \Cref{alg:accelerated-linTS} as following:

\begin{theorem}[Regret and time complexity of \Cref{alg:accelerated-linTS}, formal version see \Cref{thm:formal_acc-linTS}]\label{thm:acc-linTS}
  For any $\delta \in (0, 1)$, with probability at least $1-\delta$, the regret is bounded by
  \begin{align*}
  R(T) = \widetilde O\rbr{d^{3/2}\sqrt{T} + \eta(T)\cdot T},\end{align*}
  with $\eta(T)$ controlling the approximate MIPS accuracy.

  The per-step time complexity is $K^{1 - \Theta(\eta(T)^2) + o\rbr{\log^{-0.45} K}}$. The time complexity of the data structure maintenance (line 4) is $K^{1+o(1)}$ which is paid once at initialization.
\end{theorem}

$\eta(T)$ offers a trade-off between complexity and regret. The following corollaries show examples of choosing $\eta(T)$.

\begin{corollary}\label{coro:3}
For any $T$ not scaling with $K$, one can choose $\eta(T) = \frac{1}{\sqrt{T}}$. The regret bound is  $\widetilde O(d^{\frac{3}{2}}\sqrt{T})$, and the per-step complexity is $K^{1 - \Theta(\frac{1}{T})}$ for sufficiently large $K$. Note that this retains the regret of the linear TS algorithm and achieves per-step complexity sublinear in $K$.
\end{corollary}

\begin{corollary}\label{coro:4}
Consider the regime where $K$ is extremely large and $T = \Theta\rbr{\log^\gamma K}$ for some constant $\gamma$. Choosing $\eta(T) = T^{-\frac{0.2}{\gamma}}$, the regret bound is  $\widetilde O(T^{\frac{1}{2}} + T^{1 - \frac{0.2}{\gamma}})$, while the per-step complexity is $o(K)$.
\end{corollary}

\section{Experiments}\label{sec:experiment}

\begin{table*}[!t]
\centering
\begin{tabular}{@{}cc|cc|cc@{}}
\toprule
                                                        &          & Linear Elim & Sub-Elim, shortlist 30 & Linear TS & Sub-TS, shortlist 30 \\ \midrule
\multicolumn{1}{c|}{\multirow{3}{*}{$K=5,000$}}   & Regret   &          $3866 \pm 195$   &          $3758 \pm 190$             &     $582 \pm 54$     &     $605 \pm 59$    \\
\multicolumn{1}{c|}{}                                   & Time (s) &      $11.74$       &   $2.22~(1.99)$      &      $30.08$    &   $19.41~(19.29)$  \\
\multicolumn{1}{c|}{}                                   & Speedup &       $\times 1$      &    $\bm{\times 5.28~(\times 5.89)}$   &  $\times 1$  &   $\bm{\times 1.55~(\times 1.56)}$  \\ \midrule
\multicolumn{1}{c|}{\multirow{3}{*}{$K=100,000$}} & Regret   &  $4804 \pm 146$   &      $4701 \pm 150$     &  $721 \pm 92$   &  $734 \pm89$   \\
\multicolumn{1}{c|}{}                                   & Time (s) &  $221.19$   &    $59.40~(3.04)$    &   $280.78$    &   $32.81~(29.10)$      \\
\multicolumn{1}{c|}{}                                   & Speedup &     $\times 1$        &    $\bm{\times 3.72~(\times 72.76)}$    &   $\times 1$    &  $\bm{\times 8.56~(\times 9.65)}$     \\ \bottomrule
\end{tabular}
\vspace{+0.5em}
\caption{\textbf{Synthetic Experiment - Impact of Different $K$.} ``Linear Elim" and ``Linear TS" are baselines. ``Sub-Elim" and ``Sub-TS" are \Cref{alg:sublinear_elimination,alg:accelerated-linTS}, with the shortlist being 30. ``Regret" corresponds to the cumulative regret of 20,000 steps, with mean and standard deviation for 10 independent runs. The reported ``Time" corresponds to the overall running time of 20,000 steps, averaged over 10 independent runs. The running time excluding preprocessing is reported in the bracket. The ``Speedup" is the relative speedup compared with the corresponding baselines. The results demonstrate that Sub-Elim and Sub-TS can deliver significant speedup (e.g., a 72.76 times speedup, excluding preprocessing) especially when the number of arms $K$ is large while obtaining a similar regret as the linear time baselines.}
\label{table:different_K}
\vspace{-1em}
\end{table*}

\begin{table}[!t]
\resizebox{0.48\textwidth}{!}{
\begin{tabular}{@{}c|ccc@{}}
\toprule
Algorithm & Linear Elim & Sub-Elim, shortlist 10 & Sub-Elim, shortlist 100 \\ \midrule
Regret    & $4803 \pm 146$    & $4691 \pm 133$   &  $4837 \pm 143$      \\
Time(s)   & $221.19$ &  $59.07~(2.85)$  &  $59.84~(3.91)$   \\
Speedup  & $\times 1$   &  $\bf{\times 3.74~(\times77.61)}$  &  $\bm{\times 3.69~(\times 56.57)}$ \\ \bottomrule
\end{tabular}}
\resizebox{0.48\textwidth}{!}{
\begin{tabular}{@{}c|ccc@{}}
\toprule
Algorithm & Linear TS & Sub-TS, shortlist 10 & Sub-TS, shortlist 100 \\ \midrule
Regret    & $721\pm 92$ &   $736 \pm 89$   &  $721 \pm 92$     \\
Time(s)   & $280.78$   &    $31.44~(27.75)$      &  $36.35~(32.64)$     \\
Speedup  & $\times 1$    &  $\bm{\times 8.93~(\times 10.12)}$     &  $\bm{\times 7.72~(\times 8.60)}$   \\ \bottomrule
\end{tabular}}
\caption{\textbf{Synthetic Experiment - Impact of Approximation Precision.} The algorithms and ``Regret", ``Time" and ``Speedup" are defined the same as in \Cref{table:different_K}. Combining with the ``shortlist 30" results in \Cref{table:different_K}, it shows that a lager shortlist size  $p$ (corresponds to a smaller $\eta(T)$ in \Cref{alg:sublinear_elimination,alg:accelerated-linTS}) leads to longer running time. In our evaluated settings, all different shortlist sizes $p$ are large enough to keep regret similar to the linear time baselines.}
\label{table:different_shortlist}
\vspace{-1.5em}
\end{table}

In this section, we empirically evaluate the performance of our proposed algorithms in a synthetic environment and a real-world problem on movie recommendation.

We adopt the following algorithms for evaluation:
\begin{itemize}[leftmargin=*, itemsep=0.5ex]
    \item \textbf{Sublinear Time Elimination (Sub-Elim)}: We implement \Cref{alg:sublinear_elimination} and use HNSW algorithm \citep{malkov2018efficient} as the MIPS solver in \Cref{alg:enhanced_arm_set}.
     \item \textbf{Sublinear Time Thompson Sampling (Sub-TS)}: We implement \Cref{alg:accelerated-linTS} with HNSW as the MIPS solver.
    \item \textbf{Baselines}: We implement the linear time version of \Cref{alg:sublinear_elimination,alg:accelerated-linTS}, where the MIPS step is solved by the standard linear scan through all the arms. Such baselines allow us to evaluate the performance and acceleration brought by adopting an approximate MIPS solver.
\end{itemize}

LSH is not used for our implementation as there is currently no efficient LSH implementation that supports deletions. Note that this is purely an engineering issue - there exist LSH constructions that theoretically support efficient deletions \citep{andoni2017optimal}.

To control the tradeoff between MIPS accuracy and time complexity, we construct the MIPS solver in the following way: We first use the HNSW algorithm to retrieve a shortlist of $p$ arms, then linearly scan the retrieved $p$ arms for the one with the largest inner product. A larger $p$ gives higher accuracy but slower speed. We take $p$ from {$\cbr{10, 30, 100}$} for our experiments. The choices of different $p$ can be viewed as different $\eta(T)$ for \Cref{alg:sublinear_elimination,alg:accelerated-linTS}.

\paragraph{Synthetic Experiment} For the synthetic experiment, we first randomly generated a $16$-dimensional vector $\theta^*$ from a Gaussian distribution $\Ncal(0, \Ib_{16})$. The arms $\Acal$ are generated from the same distribution. The reward noise is unit Gaussian. Further, over the time horizon $T = 20,000$, a batch of $C_{change} = 2$ arms are generated and included into the arm set $\Acal$ every $20$ steps. The final arm set size is $K$.

Our first result (\Cref{table:different_K}) demonstrates the efficiency of Sub-Elim and Sub-TS with different numbers of arms $K$. In particular, when the number of arms $K$ is large, the Sub-Elim is able to deliver a 72.76 times speedup (excluding the preprocessing time) while retaining the regret of the linear time implementation.

\begin{table}[!t]
\resizebox{0.48\textwidth}{!}{
\begin{tabular}{@{}c|ccc@{}}
\toprule
Algorithm & Linear Elim & Sub-Elim, shortlist 30 & Sub-Elim, shortlist 100 \\ \midrule
Regret    & $3847 \pm 212$ &$3795 \pm 206$   &  $3806 \pm 206$       \\
Time(s)   & $29.55$ &   $4.22~(3.47)$     & $4.83~(4.09)$    \\
Speedup  & $\times 1$ &  $\bm{\times 7.00~(\times 8.52)}$   &  $\bm{\times 6.12~(\times 7.22)}$  \\ \bottomrule
\end{tabular}}
\resizebox{0.48\textwidth}{!}{
\begin{tabular}{@{}c|ccc@{}}
\toprule
Algorithm & Linear TS & Sub-TS, shortlist 30 & Sub-TS, shortlist 100 \\ \midrule
Regret    &  $1193 \pm 66$    &  $1177 \pm 66$  &   $1202 \pm 68$    \\
Time(s)   & $29.83$ &   $19.59~(19.38)$    &    $20.63~(20.41)$                     \\
Speedup  & $\times 1$   &   $\bm{\times 1.52~(\times 1.54)}$   &  $\bm{\times 1.45~(\times 1.46)}$ \\ \bottomrule
\end{tabular}}
\caption{\textbf{Movie Recommendation - Running time and regret.} ``Regret" corresponds to the cumulative regret of 20,000 recommendations, with mean and standard deviation for 300 users. ``Time" is the total running time of making 20,000 recommendations, averaged over 300 users. The time excluding preprocessing is reported in the bracket. The results show that ``Sub-Elim" is more than 7 times faster; and ``Sub-TS" can reduce 30\% of the baseline's running time. Both have similar regret as baselines.}
\label{table:movie_exp}
\vspace{-1em}
\end{table}

We further evaluate the impact of different choices of shortlist size $p$ (i.e., a larger $p$ corresponding to a more accurate approximate MIPS solver) and the results are presented in \Cref{table:different_shortlist}. Moreover, we evaluate our algorithms with $C_{change}\in\cbr{2, 10, 50}$ and show that all our algorithms can deliver stable speedup in the evaluated settings. We also test \Cref{alg:accelerated-linTS} when there are both additions and deletions, which demonstrates a speedup and comparable regret as baselines. The results are deferred to \Cref{apdx:exp}.

\paragraph{Movie Recommendation}

The testing environment is derived from a popular recommendation dataset: Movielens-1M \citep{harper2015movielens}. The dataset contains over 1 million ratings of 3,952 movies by more than 6,000 users.

The environment construction is similar to \citep{qin2014contextual}. We preserve the ratings of 300 users (each with more than 100 ratings) for testing. With the ratings of more than 5,700 remaining users, we create a 16-dimensional feature for each of the movies by matrix factorization. The movies' features are used as arms' features ($\xb_t(i)$).

The algorithm starts with $1,952$ movies, and interacts with the user for $20,000$ times (i.e., time horizon $T = 20,000$). 2 new movies are included for every 20 steps, which in the end leads to all 3,952 movies. In each time step, the regret is 1 if the recommended movie has a rating smaller than 4 or no rating, and otherwise, the regret is 0.

The average regret (and standard deviation) and the running time are reported in \Cref{table:movie_exp}.  Our empirical results demonstrate the acceleration and great empirical performance of the proposed sublinear time algorithms.

\section*{Acknowledgment}
This work was partially supported by NSF grants 1564000, 1934932, 2019844, and 2107037, and the Machine Learning Lab at UT Austin.

\bibliography{example_paper}
\bibliographystyle{icml2022}

\newpage
\appendix
\onecolumn

\section{Proof for \Cref{sec:mips}}
\subsection{Proof of \Cref{prop:non-adaptive}}

\citet{andoni2017optimal} proposed a data structure that solves the approximate nearest neighbor (ANN) search problem. We therefore first present a transformation, which converts a MIPS problem into the nearest neighbor search problem. The transformation is proposed in \cite{bachrach2014speeding}.

A $(c', r')$-approximate nearest neighbor search problem aims to find $p' \in P' \subseteq \RR^{d+3}$ for a query $q'\in \RR^{d+3}$ such that $\norm{p' - q'}_2 \le c' r'$, if there exists $\widetilde p \in P'$ such that $\norm{\widetilde p - q'}_2 \le r'$. Recall that for MIPS problem, we have $\norm{p}_2 \le 1, \forall p \in P$ and $\norm{q}_2 \le 1$ by \Cref{def:mips}. We take the following transformation
  \begin{align*}
      & p' = \sbr{\frac{1}{2}; \frac{p}{2}; \sqrt{\frac{3 - \norm{p}^2_2}{4}}; 0} \in \RR^{d+3}, \forall p \in P;\\
      & q' = \sbr{\frac{1}{2}; \frac{q}{2}; 0; \sqrt{\frac{3 - \norm{{q}}_2^2}{4}}} \in \RR^{d+3}.
  \end{align*}
Let $P' = \cbr{p'~|~\forall p \in P}$. Then for any point $p'\in P'$ and any query $q'$, we have
\begin{align*}
    \norm{p' - q'}_2^2 & = \norm{p'}_2^2 + \norm{q'}_2^2 - 2\inner{p'}{q'} \\
    & = \frac{3 - \inner{p}{q}}{2}.
\end{align*}
Therefore the original $(c, r, 0)$-MIPS is equivalent to $(c', r')$-ANN with $c' = \sqrt{3 -  cr} / \sqrt{3 - r}$ and $r' = \sqrt{\frac{3-r}{2}}$. For $c \in [0, 1)$ and $r \in (0, 1]$, we have $r' \in [1, \sqrt{3/2}), c'r' \in (1, \sqrt{3/2}]$ and $c' \in (1, \sqrt{3/2}]$.

\citet{andoni2017optimal} constructed a data structure that solves $(c', r')$-ANN with constant success probability. It has $K^{1 + o(1)}$ preprocessing time complexity, $K^{\rho_q + o(1)}$ query time complexity and $K^{o(1)}$ time complexity for adding a new point to $P$. The rest of our proof follows the same procedure as \citep{andoni2017optimal}, but aims to give a more explicit characterization for the $o(1)$ term in the query time complexity, which turns out to be $o\rbr{\log^{-0.45}K}$. This more explicit form is useful when $\rho_q$ is very close to 1 (i.e., $\rho_q = 1 - o(1)$).

\textbf{Short description of the data structure construction.} 

The proposed data structure stores all data points $P$ in a tree, with depth $M$ and branching factor at most $B$.

During preprocessing, each tree node $n$ draws a unit norm vector $u_n$ uniformly at random. Each point $p \in P$ will traverse down the tree from the root, the point $p$ will descend through a node $n$ if the inner product $\inner{p}{u_n} \ge \eta_u$, where $\eta_u$ is a scalar parameter that is shared for the entire tree (i.e., all nodes use the same $\eta_u$). The point can descend through multiple nodes at the same level, and will possibly reach multiple leave nodes in the end. The leave nodes will store all the points $p$ that reached it during preprocessing.

During query time, a query $q$ will also descend from the root, and go down through the nodes with $\inner{q}{u_n} \ge \eta_q$. Similar to a point $p$ in the preprocessing stage, the query $q$ will possibly reach multiple leave nodes in the end. It will then linearly scan through all the points $p$ stored in the corresponding leave nodes. It will stop scanning and return the first point $p$ that solves the $(c', r')$-ANN problem.

We will omit much detail of the proof but highlight the difference. One can check the full proof in Section 3.3.3 of \citep{andoni2017optimal}. Define $F(\eta) \coloneqq \PP_{z\sim N(0, 1)^{d}}\sbr{\inner{z}{u} \ge \eta}$, where $u$ is an arbitrary point on the unit sphere. Define $G(s, \eta, \sigma) = \PP_{z\sim N(0, 1)^{d}}\sbr{\inner{z}{u} \ge \eta \text{ and } \inner{z}{v} \ge \sigma}$, where $u, v$ are two points on the unit sphere with $\norm{u - v}_{2} = s$. 

\textbf{Preprocessing time complexity}

We now prove the preprocessing time complexity. Notice that $F(\eta_{u})$ is the probability of one point $p\in P$ descends from a node to a child node. 
\begin{lemma}
  The data structure construction has the following complexity in expectation:
  \begin{align*}
    & \textbf{Time : } K^{1+o(1)}\cdot B \cdot \sum_{i=0}^{M}\cdot \rbr{B\cdot F(\eta_{u})}^{i}. \\
    & \textbf{Space : } K^{1+o(1)}\cdot M\cdot \rbr{B\cdot F(\eta_{u})}^{M}
  .\end{align*}
\end{lemma}
\begin{proof}
  The analysis for space complexity is presented in Lemma 3.7 in \citep{andoni2017optimal}. We show the time complexity analysis in a similar way.

  In the preprocessing of $P$, a point $p\in P$ in expectation descends to $B\cdot F(\eta_{u})$ nodes from one node. Thus the expected number of points at depth-$i$ is $K\cdot\rbr{B\cdot F(\eta_{u})}^{i}$. Each point in at one node incurs a time complexity of $B\cdot K^{o(1)}$. In our regime of interest, $K$ is extremely large and we treat the dimension $d$ as $K^{o(1)}$.

  There is an over-estimation at the depth-$M$ node. Since there is no further branching in such nodes, each point at one depth-$M$ node only incurs $K^{o(1)}$ time complexity, instead of $B\cdot K^{o(1)}$. This over-estimation does not hurt further analysis.
\end{proof}

Next, we show that the preprocessing time complexity is the same as the space complexity.
\begin{lemma}
  Both time and space compelxity of data structure construction are $K^{1 + o(1)}\cdot \rbr{B\cdot F(\eta_{u})}^{M}$.
\end{lemma}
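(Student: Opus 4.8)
The plan is to reduce both expressions from the previous lemma to the common form $K^{1+o(1)}\cdot(B\cdot F(\eta_u))^M$, by absorbing the auxiliary factors $M$ and $B$ into the $K^{o(1)}$ term and by collapsing the geometric sum appearing in the time bound onto its largest term. First I would record the two structural facts that make these factors negligible: the tree depth satisfies $M = K^{o(1)}$ (it is polylogarithmic in $K$ in the construction of \citep{andoni2017optimal}), and the branching factor satisfies $B = K^{O(1/\sqrt{\log K})} = K^{o(1)}$, as already established earlier in this appendix.

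For the space complexity this essentially finishes the argument: the bound $K^{1+o(1)}\cdot M\cdot(B\cdot F(\eta_u))^M$ collapses to $K^{1+o(1)}\cdot(B\cdot F(\eta_u))^M$ once the factor $M = K^{o(1)}$ is absorbed into the $K^{o(1)}$ exponent.

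The time complexity is $K^{1+o(1)}\cdot B\cdot\sum_{i=0}^M (B\cdot F(\eta_u))^i$, so the substantive step is to control the geometric sum. Writing $x := B\cdot F(\eta_u)$, I would argue that in the operative parameter regime $x \ge 1$, so every summand is at most the last one: $\sum_{i=0}^M x^i \le (M+1)\,x^M = K^{o(1)}\cdot x^M$, while trivially $\sum_{i=0}^M x^i \ge x^M$. Hence the sum equals $K^{o(1)}\cdot(B\cdot F(\eta_u))^M$, and multiplying by the prefactor $K^{1+o(1)}\cdot B = K^{1+o(1)}$ yields time complexity $K^{1+o(1)}\cdot(B\cdot F(\eta_u))^M$, matching the space bound.

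The main obstacle is justifying that the geometric series is dominated by its deepest-level term rather than its root-level term, which hinges precisely on $x = B\cdot F(\eta_u) \ge 1$. I would establish this from the fact that the expected number of leaf copies of each point, $(B\cdot F(\eta_u))^M = x^M$, must be $\Omega(1)$ for the structure to retain every point (equivalently $\rho_u \ge 0$, since the space cannot be sublinear in $K$); combined with $M \ge 1$, the inequality $x^M \ge 1$ forces $x \ge 1$. Were one to have $x < 1$ instead, the sum would collapse onto the root level and the time would be $K^{1+o(1)}$ rather than $K^{1+o(1)}\cdot x^M$, so ruling this case out is exactly the point demanding care.
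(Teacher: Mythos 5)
Your argument reaches the right conclusion and follows the same skeleton as the paper's proof: absorb $M$ and $B$ into the $K^{o(1)}$ exponent for the space bound, and dominate the geometric sum by its deepest-level term for the time bound. The genuine difference is how the key inequality $B\cdot F(\eta_u)\ge 1$ is justified. The paper reads it off the construction: $F(\eta_u)\ge G(r,\eta_u,\eta_q)$ (a joint probability is at most a marginal), and the parameters of \citet{andoni2017optimal} satisfy $B\cdot G(r,\eta_u,\eta_q)\ge 100$, so the ratio is at least $100$ and the sum is $O(1)\cdot(B\cdot F(\eta_u))^M$ --- a constant-factor bound, slightly cleaner than your $(M+1)\cdot(B\cdot F(\eta_u))^M$, though both suffice since $M+1=K^{o(1)}$. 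Your justification --- that $(B\cdot F(\eta_u))^M$ must be $\Omega(1)$ because the structure must retain every point --- is the weak link: it infers a property of the parameters from the assumption that the data structure is sensible rather than from the actual parameter settings, and $\Omega(1)$ only yields $B\cdot F(\eta_u)\ge c^{1/M}$, which can sit slightly below $1$, contrary to your claim that it ``forces $x\ge 1$.'' That borderline case still closes (if $x<1$ the sum is at most $M+1\le (M+1)x^M/c=K^{o(1)}x^M$), but you should either handle it explicitly or, better, invoke the construction's guarantee $B\cdot F(\eta_u)\ge B\cdot G(r,\eta_u,\eta_q)\ge 100$ as the paper does.
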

\begin{proof}
  As suggested in \cite{andoni2017optimal}, we set $M = \sqrt{\log K}$, which immediately implies $K^{1 + o(1)}\cdot \rbr{B\cdot F(\eta_{u})}^{M}$ space complexity.

  For time complexity, we have
  \begin{align*}
    \sum_{i=0}^{M}(B\cdot F(\eta_{u}))^{i} = O(1)(B\cdot F(\eta_{u}))^{M}
  .\end{align*}
  This follows from $F(\eta_{u}) \ge G(r, \eta_{u}, \eta_{q})$, and thus $B\cdot F(\eta_{u}) \ge B \cdot G(r, \eta_{u}, \eta_{q})$, where in the analysis of \citep{andoni2017optimal} it sets $B \cdot G(r, \eta_u, \eta_q) \ge 100$. In the proof of the optimal $\rho_{u}, \rho_{q}$ trade-off by \citet{andoni2017optimal}, it showed that when $\rho_u = 0$ (which is the setting we adopted), the specified $M, B$ leads to
  \begin{align*}
    B^{M} = K^{c_0}
  .\end{align*}
  The $c_0$ is a constant not depending on $K$. For $M = \sqrt{\log K}$, we have $B = K^{\frac{c'}{M}} = K^{o(1)}$. Putting these together, we have the time complexity to be $K^{1 + o(1)}(B\cdot F(\eta_{u}))^{M}$.
\end{proof}

With the choice of $\eta_{u} = 0$, the complexity $K^{1 + o(1)}(B\cdot F(\eta_{u}))^{M}$ is $K^{1 + o(1)}$ (see detailed proof in Section 3.3.3 \citep{andoni2017optimal}). It therefore achieves the $K^{1 + o(1)}$ time complexity.

\textbf{Query time complexity}

Here we show the $K^{\rho_q + o\rbr{\log^{-0.45}K}}$ query complexity extended from \citep{andoni2017optimal}, where the query complexity was presented as $K^{\rho_q + o(1)}$. Note that this is not an improvement over the original analysis. We are only more explicit about the $o(1)$ term which is necessary for our case.

During query time, the query $q$ recursively descends from the root, with each descending happening with probability $F(\eta_q)$. According to the Lemma 3.8 of \citep{andoni2017optimal}, the query time complexity is
\begin{align*}
    d \cdot B \cdot (B\cdot F(\eta_q))^M + K\cdot d \cdot (B\cdot G(c'r', \eta_u, \eta_q))^M,
\end{align*}
where $F(\eta_q)$ denotes the probability of the query descending from one node to one of its child node, and $G(c'r', \eta_u, \eta_q)$ denotes the probability of a query and a qualifying point (i.e., distance smaller than $c'r'$) in $P$ both descending to a child node. In the proof of query time complexity, \citet{andoni2017optimal} take that $F(\eta_u)^M = K^{-\sigma}$ and $F(\eta_q)^M = K^{-\tau}$.

We first present a stronger version of Lemma 3.1 in \cite{andoni2017optimal},
\begin{align*}
    F(\eta_q) = e^{-(1 + o(\eta_q^{-9/5}))\cdot\frac{\eta_q^2}{2}},
\end{align*} where the original Lemma 3.1 states $F(\eta_q) = e^{-(1 + o(1))\cdot\frac{\eta_q^2}{2}}$. This stronger version of Lemma 3.1 follows immediately from a tight Gaussian tail bound. As $\eta_q \rightarrow \infty$, we have
\begin{align*}
    \rbr{\frac{1}{\eta_q} - \frac{1}{\eta_q^3}}\frac{e^{-\eta_q^2 / 2}}{\sqrt{2\pi}} \le F(\eta_q) \le \frac{1}{\eta_q}\cdot\frac{e^{-\eta_q^2/2}}{\sqrt{2\pi}}.
\end{align*}

Follow the same analysis as in \cite{andoni2017optimal}, the stronger version of Lemma 3.1 implies a stronger version of Lemma 3.2 of \cite{andoni2017optimal},
\begin{align*}
    G(c'r',\eta_u,\eta_q) &= e^{-(1+o(\eta_q^{-9/5}))\cdot \frac{\eta_u^2 + \eta_q^2 - 2\alpha(c'r')\eta_u\eta_q}{2\beta^2(c'r')}},
\end{align*}
where $\alpha(s) = 1 - \frac{s^2}{2}$ is the cosine of the angle between two points on a unit Euclidean sphere with distance $s$ between them, and $\beta(s) = \sqrt{1 - \alpha^2(s)}$ is the sine of the same angle. Note that the original Lemma 3.2 is $G(s,\eta,\sigma) = e^{-(1+o(1))\cdot \frac{\eta^2 + \sigma^2 - 2\alpha(s)\eta\sigma}{2\beta^2(s)}}$. By requiring that $F(\eta_q)^M = K\cdot G(c'r', \eta_u, \eta_q)^M$, as $\eta_q \rightarrow \infty$, we have
\begin{align*}
     {\frac{\sigma + \tau - 2\alpha(c'r')\cdot\sqrt{\sigma\tau}}{\beta^2(c'r')} - 1} = \rbr{1 + o\rbr{\eta_q^{-9/5}}}\tau.
\end{align*}
As suggested in \cite{andoni2017optimal}, to have $\eta_u = 0$, we should set $\sqrt{\tau} = \frac{\alpha(r')\beta(c'r')}{1 - \alpha(r')\alpha(c'r')}$. With the transformation (from MIPS to ANN) proposed previously, we have $r' \in [1, \sqrt{3/2}), c'r' \in (1, \sqrt{3/2}]$. Therefore $\tau$ is bounded by constants as $\tau \in [0.06, 0.34]$, and we have
\begin{align*}
    \tau = \frac{\sigma + \tau - 2\alpha(c'r')\cdot\sqrt{\sigma\tau}}{\beta^2(c'r')} - 1 + o\rbr{\eta_q^{-9/5}}.
\end{align*}
Notice that with $F(\eta_q)^M = K^{-\tau}$ and $\tau$ bounded by constants, we have $\eta_q = \Omega(\log^{1/4} K)$ and therefore,
\begin{align}\label{eq:2}
    \tau = \frac{\sigma + \tau - 2\alpha(c'r')\cdot\sqrt{\sigma\tau}}{\beta^2(c'r')} - 1 + o\rbr{\log^{-0.45}K}.
\end{align}
In the original analysis by \cite{andoni2017optimal}, the result was
\begin{align*}
    \tau = \frac{\sigma + \tau - 2\alpha(c'r')\cdot\sqrt{\sigma\tau}}{\beta^2(c'r')} - 1 + o\rbr{1}.
\end{align*}
Therefore from \Cref{eq:2}, we have that, up to $o\rbr{\log^{-0.45}K}$ terms,
\begin{align*}
    \sqrt{\sigma} = \alpha(c'r')\sqrt{\tau} + \beta(c'r')
\end{align*}
Further, with $r' \in [1, \sqrt{3/2}), c'r' \in (1, \sqrt{3/2}]$, we have the following term also bounded by constants:
\begin{align*}
    \frac{\sigma + \tau - \alpha(r')\sqrt{\sigma \tau}}{\beta(r')^2} \in [0.97, 1.34].
\end{align*}

The rest of analysis follows the same as Section 3.3.3 in \citep{andoni2017optimal}, with all $o(1)$ replaced by $o(\log^{-0.45}K)$. As a result, the query time is $K^{\rho_q + o\rbr{\log^{-0.45}K}}$.

\textbf{Time complexity for adding a new point to $P$}

Adding a point to the data structure takes $K^{o(1)}\cdot B \cdot \sum_{i=0}^{M}\cdot \rbr{B\cdot F(\eta_{u})}^{i}$ time. We have proven in the \textbf{Preprocessing time complexity} that it is $K^{o(1)}$ under the choice of $\eta_u = 0$. Therefore the complexity of adding a new point is $K^{o(1)}$.

\subsection{Proof of \Cref{thm:near-linear-MIPS}}
\begin{proof}
  Denote $Q$ to be the unit $l_{2}$ ball in $\RR^{d}$ centered at 0. We have $q_{t}\in Q, \forall t\in [T]$. We can discretize $Q$ into lattice $\widehat Q$ with precision $\frac{\epsilon}{d}$. Note that every point in $\widehat Q$ has all its coordinates being multiples of $\frac{\epsilon}{d}$. We can then bound the size of $\widehat Q$ to be $\abs{\widehat Q} \le \rbr{\frac{2d}{\epsilon}}^{d}$.

  The probability of all $\kappa$ copies of $\Scal(c, r, 0)$ fail for any $\widehat q\in \widehat Q$ and any point $p \in p$ is
  \begin{align*}
    & \PP\rbr{\text{$\exists \widehat q\in\widehat Q, p\in p$ \text{ s.t. all } $\Scal(c, r, 0, \delta)$ fail on $p, \widehat q$}}
    \le K \rbr{\frac{2d}{\epsilon}}^{d}0.1^{\kappa} \le \delta
  .\end{align*}
  the last inequality follows from $\kappa = d\log \rbr{\frac{K d}{\epsilon\delta}} \ge \log\rbr{\frac{1}{K}\rbr{\frac{2d}{\epsilon}}^{-d}{\delta}}/\log\rbr{0.1}$.
  
  For any query $q \in Q$, rounding it to the nearest point $\widehat q \in \widehat Q$, it induces $\epsilon$ additive error for inner product (recall that $\norm{p} \le 1, \forall p \in P$). Thus, for arbitrary query sequence from $Q$, running $\kappa$ copies of $\Scal(c,r, 0)$ solves $(c,r,\epsilon)$-MIPS problem successfully for all the queries with probability at least $1 - \delta$. This completes the proof.
\end{proof}

\section{Proof for \Cref{sec:sublinear_elim,sec:acc_linTS}}

\subsection{Definition of TS distribution}
\begin{definition}[\citet{abeille2017linear}]\label{def:D-TS}
  $\Dcal^{TS}$ is a multivariate distribution on $\RR^{d}$ absolutely continuous with respect to the Lebesgue measure which satisfies the following properties:
  \begin{enumerate}
    \item (anti-concentration) there exists a strictly positive probability $p$ such that for any $u\in\RR^{d}$ with $\norm{u}_2 = 1$,
      \begin{align*}
        \PP_{\xi\sim\Dcal^{TS}}\rbr{u^{\top}\xi \ge 1} \ge p
      .\end{align*}
    \item (concentration) there exists $b, b'$ positive constants such that $\forall \delta\in(0, 1)$
      \begin{align*}
        \PP_{\xi\sim\Dcal^{TS}}\rbr{\norm{\xi} \le \sqrt{bd\log\frac{b'd}{\delta}}} \ge 1-\delta
      .\end{align*}
  \end{enumerate}
\end{definition}

\subsection{Technical Lemma}
We first present 2 previously established supporting lemmas on bounding $\norm{\widehat \theta_{t} - \theta}_{V_{t}}$ and $\sum_{t=1}^{T}\norm{x_{t}}_{V_{t}^{-1}}$, which are useful for proving \Cref{thm:sublinear_elimination,thm:acc-linTS}.
\begin{lemma}[Thm. 2 of \citep{abbasi2011improved}]\label{lemma:bound_theta}
  With \Cref{as:1} to \ref{as:3}, for the $\widehat \theta_{t}$ estimation according to \Cref{eq:theta_estimation} and for any $\delta > 0$, with probability at least $1 - \delta$ for all $t \ge 0$, we have
  \begin{align*}
    \norm{\widehat \theta_{t} - \theta^{*}}_{V_{t}} \le 1 + \sqrt{2\log\rbr{\frac{1}{\delta}} + d\log\rbr{1+\frac{T}{d}}}
  .\end{align*}
\end{lemma}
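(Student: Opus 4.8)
This is the self-normalized tail bound underlying OFUL, so the plan is to reproduce its proof via an error decomposition followed by a martingale (method-of-mixtures) argument. First I would write $Y_{1:t-1} = X_{1:t-1}\theta^{*} + \eta_{1:t-1}$, where $\eta_{1:t-1} = (\eta_1, \dots, \eta_{t-1})^{\top}$, and substitute into \eqref{eq:theta_estimation}. Using $X_{1:t-1}^{\top}X_{1:t-1} = V_t - I$ this gives
\begin{align*}
  \widehat\theta_t - \theta^{*} = V_t^{-1}S_t - V_t^{-1}\theta^{*}, \qquad S_t \coloneqq \sum_{s=1}^{t-1} x_s \eta_s.
\end{align*}
Since $\norm{V_t^{-1}u}_{V_t} = \norm{u}_{V_t^{-1}}$, taking the $V_t$-norm and applying the triangle inequality yields $\norm{\widehat\theta_t - \theta^{*}}_{V_t} \le \norm{S_t}_{V_t^{-1}} + \norm{\theta^{*}}_{V_t^{-1}}$. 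The regularization term is handled trivially: $V_t \succeq I$ implies $\norm{\theta^{*}}_{V_t^{-1}} \le \norm{\theta^{*}}_2 \le 1$ by \Cref{as:3}. It then remains to bound $\norm{S_t}_{V_t^{-1}}$ uniformly in $t$.

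The core step, and the main obstacle, is this uniform-in-time control of the self-normalized quantity $\norm{S_t}_{V_t^{-1}}$. I would establish it by the method of mixtures. For fixed $\lambda \in \RR^d$, define
\begin{align*}
  M_t^{\lambda} = \exp\rbr{\sum_{s=1}^{t-1}\rbr{\lambda^{\top}x_s\eta_s - \tfrac12(\lambda^{\top}x_s)^2}}.
\end{align*}
Because \Cref{as:2} gives $\EE[\exp(\lambda^{\top}x_s\eta_s) \mid \Fcal_{s-1}] \le \exp(\tfrac12(\lambda^{\top}x_s)^2)$ (note $x_s$ is $\Fcal_{s-1}$-measurable), each factor has conditional expectation at most one, so $M_t^{\lambda}$ is a nonnegative supermartingale with $\EE[M_t^{\lambda}] \le 1$. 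Mixing over $\lambda$ with a standard Gaussian prior $\Lambda \sim \Ncal(0, I)$ and setting $\overline M_t = \EE_{\Lambda}[M_t^{\Lambda}]$ preserves the supermartingale property; completing the square in the resulting Gaussian integral gives the closed form
\begin{align*}
  \overline M_t = \det(V_t)^{-1/2}\exp\rbr{\tfrac12\norm{S_t}_{V_t^{-1}}^2}.
\end{align*}

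Finally I would invoke the maximal inequality for nonnegative supermartingales (Ville's inequality), $\PP\rbr{\exists t: \overline M_t \ge 1/\delta} \le \delta$, so that with probability $1 - \delta$ we have $\norm{S_t}_{V_t^{-1}}^2 < 2\log(1/\delta) + \log\det(V_t)$ simultaneously for all $t$. The determinant is bounded by AM--GM on the eigenvalues: $\norm{x_s}_2 \le 1$ gives $\tr(V_t) \le d + T$, whence $\det(V_t) \le (1 + T/d)^d$ and $\log\det(V_t) \le d\log(1 + T/d)$. Substituting into the decomposition from the first step produces the stated bound. The two points requiring care are (i) verifying that $M_t^{\lambda}$ is genuinely a supermartingale with respect to $\cbr{\Fcal_{t-1}}$, which hinges on $x_s$ being measurable before $\eta_s$ is revealed, and (ii) the passage from a fixed-$t$ bound to the uniform-in-$t$ statement, which is exactly what Ville's inequality buys and is the reason the mixture martingale is introduced rather than a union bound over time.
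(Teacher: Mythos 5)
Your proof is correct: the paper does not prove this lemma itself but imports it verbatim as Theorem 2 of \citet{abbasi2011improved}, and your argument is a faithful reconstruction of that source's proof — the decomposition $\widehat\theta_t - \theta^{*} = V_t^{-1}S_t - V_t^{-1}\theta^{*}$, the method-of-mixtures supermartingale with Gaussian prior, Ville's inequality for uniformity in $t$, and the determinant--trace bound. The only point worth flagging is that the final step $\log\det(V_t)\le d\log(1+T/d)$ uses $t\le T+1$, so the "for all $t\ge 0$" in the statement should really be read as "for all $t$ within the horizon," a caveat inherited from the paper's own phrasing rather than a gap in your argument.
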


\begin{lemma}[Lemma 4 of \citep{abbasi2011improved}]\label{lemma:bound_x}
  Let $\cbr{x_{t}}$ be a sequence in $\RR^{d}$. For $V_{t} = I + \sum_{s=1}^{t-1}x_{s}x_{s}^{\top}$, we have
  \begin{align*}
    \sum_{t=1}^{T}\norm{x_{t}}_{V_{t}^{-1}}^{2} \le 2d \log\rbr{1 + \frac{T}{d}}
  .\end{align*}
\end{lemma}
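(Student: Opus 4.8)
The plan is to use the standard elliptical potential (log-determinant) argument, tracking the quadratic forms $\norm{x_{t}}_{V_{t}^{-1}}^{2} = x_{t}^{\top}V_{t}^{-1}x_{t}$ through the growth of $\det(V_{t})$. First I would relate consecutive determinants: since $V_{t+1} = V_{t} + x_{t}x_{t}^{\top}$ is a rank-one update, the matrix determinant lemma gives $\det(V_{t+1}) = \det(V_{t})\rbr{1 + \norm{x_{t}}_{V_{t}^{-1}}^{2}}$. Telescoping this identity from $t = 1$ to $T$ and using $V_{1} = I$ (so $\det(V_{1}) = 1$) yields $\det(V_{T+1}) = \prod_{t=1}^{T}\rbr{1 + \norm{x_{t}}_{V_{t}^{-1}}^{2}}$.

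Next I would convert the sum of quadratic forms into the logarithm of this product via an elementary inequality. Because $V_{t} \succeq I$ and (under \Cref{as:3}) $\norm{x_{t}} \le 1$, we have $\norm{x_{t}}_{V_{t}^{-1}}^{2} = x_{t}^{\top}V_{t}^{-1}x_{t} \le \norm{x_{t}}^{2} \le 1$. On the interval $[0,1]$ the bound $y \le 2\log(1 + y)$ holds, since $2\log(1+y) - y$ vanishes at $y = 0$ and is nondecreasing there (its derivative $\frac{2}{1+y} - 1 \ge 0$ for $y \le 1$). Applying this with $y = \norm{x_{t}}_{V_{t}^{-1}}^{2}$ and summing gives $\sum_{t=1}^{T}\norm{x_{t}}_{V_{t}^{-1}}^{2} \le 2\sum_{t=1}^{T}\log\rbr{1 + \norm{x_{t}}_{V_{t}^{-1}}^{2}} = 2\log\det(V_{T+1})$, where the last equality is the telescoped product from the previous step.

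Finally I would bound $\det(V_{T+1})$ by AM-GM applied to its (positive) eigenvalues, namely $\det(V_{T+1}) \le \rbr{\tr(V_{T+1})/d}^{d}$. Since $\tr(V_{T+1}) = \tr(I) + \sum_{t=1}^{T}\norm{x_{t}}^{2} \le d + T$, this gives $\det(V_{T+1}) \le \rbr{1 + T/d}^{d}$, and substituting into the previous display yields $\sum_{t=1}^{T}\norm{x_{t}}_{V_{t}^{-1}}^{2} \le 2d\log\rbr{1 + T/d}$, as claimed.

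I expect no genuinely difficult step here; the argument is a clean potential-function telescoping. The two places needing care are (i) verifying $\norm{x_{t}}_{V_{t}^{-1}}^{2} \le 1$ so that the linear-versus-logarithm inequality is legitimate — this is precisely where the unit-norm bound on the features is used — and (ii) invoking the determinant lemma and AM-GM in the correct direction. The resulting bound is tight up to constants, reflecting that cumulative uncertainty accumulates only logarithmically in the horizon $T$.
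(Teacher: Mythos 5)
Your proof is correct and is essentially the standard elliptical potential argument from the cited reference (Abbasi-Yadkori et al.), which this paper imports without reproving: the matrix determinant lemma, the inequality $y \le 2\log(1+y)$ on $[0,1]$, and the determinant--trace (AM--GM) bound. You are also right to flag that the step $\norm{x_t}_{V_t^{-1}}^2 \le 1$ needs the unit-norm bound of \Cref{as:3}, which the lemma statement omits but the paper's setting supplies.
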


\subsection{Proof of \Cref{lemma:bounded_regret}}

\begin{proof}
  Let $a_t^{*}$ be the optimal arm (whose feature is $x_{a_t^{*}}$) at time $t$ and suppose $\Psi_{s^{*}}$ is the set that contains $a^{*}$. Let $t_1$ be the time step that $a_{t}^*$ is placed to $\Psi_{s^{*}}$, and $t_2$ is the time that the played arm $a$ is placed in $\Psi_{s_t}$, we have
\begin{align*}
  x_{a_t^*}^{\top} \theta^{*} & \le x_{a_t^*}^{\top}\widehat\theta_{t_1} + \beta(\delta)\norm{x_{a_t^*}}_{V_{t_1}^{-1}} \\
                       & \le x_{a_t^*}^{\top}\widehat\theta_{t_1} + 2^{-s^{*}} \\
                       & \stackrel{(a)}{\le} \underline r + 2^{-s^{*}} + 2^{-s^{*}} \\
                       & \stackrel{(b)}{\le} x_{a}^{\top}\widehat\theta_{t_2} + 2^{-s_t} + 2\cdot2^{-s^{*}} \\
                       & \le x_{a}^{\top}\theta^{*} + \beta(\delta)\norm{x_{a}}_{V_{t_2}^{-1}}+ 2^{-s_t} + 2\cdot2^{-s^{*}} \\
                       & \le x_{a}^{\top}\theta^{*} + 2\cdot2^{-s_t} + 2\cdot2^{-s^{*}} \\
                       & \stackrel{(c)}{\le} x_{a}^{\top}\theta^{*} + 4\cdot2^{-s_t}
.\end{align*}
Inequality $(a)$ holds as $\underline r \ge x_{a_t^*}^{\top}\widehat\theta_{t_1} - 2^{-s^{*}}$ (since $\underline r$ is always greater than $x_{a_t^*}^{\top}\widehat\theta_{t_1} - 2^{-s^{*}}$ after $a_t^{*}$ advances to a $\Psi_{s^*}$); inequality $(b)$ holds as arm $a$ is not eliminated from $\Psi_{s_t}$; inequality $(c)$ holds as $s_t \le s^{*}$. This completes the proof.
\end{proof}

\subsection{Proof of \Cref{thm:sublinear_elimination}}
We first state the formal version of \Cref{thm:sublinear_elimination}.
\begin{theorem}[Formal version of \Cref{thm:sublinear_elimination}]\label{thm:formal_sublinear_elimination}
For any $\delta \in (0, 1)$, with probability at least $1- \delta$, the regret of \Cref{alg:sublinear_elimination} is bounded by
  \begin{align*}
    R(T) \le 16\beta(\delta/2)\sqrt{Td\log\rbr{1 + \frac{T}{d}}} + 64 \eta(T)\cdot T
  ,\end{align*}
  with $ \beta(\delta/2) = 1 + \sqrt{2\log\rbr{\frac{2}{\delta}} + d\log\rbr{1 + \frac{T}{d}}}$. $\eta(T) \in (0, 1)$ controls the approximate MIPS accuracy.
  
  The per-step time complexity is $K^{1 - \Theta(\frac{\eta(T)^4}{\log^2 T}) + o(\log^{-0.45}K)}$. The overall time complexity overhead (e.g., initialization) is $K^{1+o(1)}$.
\end{theorem}

\paragraph{{Proof of \Cref{thm:formal_sublinear_elimination} - time complexity}}

We break the time complexity into two parts:

\textbf{Overhead for maintaining $\Psi_{s}$:} This part contains the overhead induced by maintaining $\Psi_{s}$, which includes lines 5, 8-11, 19-24 of \Cref{alg:sublinear_elimination}. Line 5 is intializing all the initial $K$ arms, which takes $O(K \log K)$ for the heap related operations, and $O(\kappa\cdot K^{1 + o(1)})$ time to add all $a\in \Acal$ to the adaptive MIPS solver $\Mcal_{0}$. For lines 8-9 and 19-20, it only happens when an arm $a$ needs to be added (or advanced) to another $\Psi_{s}$. Notice that each arm
can only be added (or advanced) to a $\Psi_{s}$ for $\ceil{\log \frac{1}{8 \eta(T)}} + 1$ times. Therefore all the arms in total will induce an $\kappa\cdot K^{1+o(1)}\cdot\log\frac{1}{8\eta(T)}$ time complexity in overhead. Further for line 10-11 and 21-24, it only happens when an arm needs to be eliminated. Both the heap $\Hcal_{s}$ and the adaptive MIPS solver $\Mcal_{s}$ need to be updated, which in total induces an $\kappa\cdot K^{1+o(1)} + O(K\cdot \log K)$ overhead for all the arms (since all the arms can only be eliminated once). The overall overhead complexity is therefore $\kappa\cdot K^{1+o(1)}\cdot\log\frac{1}{8\eta(T)} + O( K \cdot\log K)$, rearranging the terms gives $K^{1+o(1)}$.

\textbf{Time complexity for selecting an arm:} This includes lines 18 and 26. With the construction of the adaptive MIPS solver $\Mcal$ (\Cref{alg:mips-adaptive}), the query time complexity is given by $\kappa \cdot K^{\rho_q + o(\log^{-0.45}K)}$. Plug in $(c, r, \epsilon) = (1/4, \frac{2^{-2s}(1-\eta(T)^{2})}{d^2\beta(\delta/2)^2}, \frac{2^{-2s}\eta(T)^{2}}{d^{2}\beta(\delta/2)^{2}})$ and $s \le \ceil{\log\frac{1}{8\eta(T)}}$, we have $\kappa =
K^{O(\frac{1}{\sqrt{\log K}})}$, and
$\rho_q = \frac{4c'^2}{(1+c'^2)^2}$, with $c' = 1 + \Theta(\frac{\eta(T)^2}{\log T})$ (see \Cref{thm:near-linear-MIPS}). Thus we have $\rho_q = 1 - \Theta(\frac{\eta(T)^4}{\log^2 T})$, which gives the per-step time complexity $K^{1 - \Theta(\frac{\eta(T)^4}{\log^2 T}) + o(\log^{-0.45}K)}$.

Therefore the overhead is $K^{1 + o(1)}$ and the per-step complexity is $K^{1 - \Theta(\frac{\eta(T)^4}{\log^2 T}) + o(\log^{-0.45}K)}$.

\paragraph{Proof of \Cref{thm:sublinear_elimination} - regret bound}

\begin{proof}
  With failure probability being $\frac{\delta}{2s_{max}}$ for \Cref{alg:mips-adaptive} and $\beta(\delta/2)$ in \Cref{alg:sublinear_elimination}, with probability at least $1-\delta$, all queries to \Cref{alg:mips-adaptive} (line 18 and line 26 of \Cref{alg:sublinear_elimination}) are answered correctly and \Cref{lemma:bound_theta} holds for all $t \in [T]$. Conditioning on those success events, we proceed to the regret bound.
  
  Suppose at time $t$, the played arm $a_t$ is chosen from set $\Psi_{s_{t}}$. We have that
\begin{align*}
  \beta(\delta/2)^{2}\inner{vec(x_{a_t}x_{a_t}^{\top})}{vec(V_{t}^{-1})} \le 2^{-2s_{t}}
.\end{align*}

By $\Cref{lemma:bounded_regret}$, we know that
\begin{align*}
    x_{a_t^*}^{\top}\theta^* - x_{a_t}^\top\theta^* \le 4 \cdot 2^{-s_t}
\end{align*}

When $s_t = \ceil{\log \frac{1}{8 \eta(T)}}$, we have $2^{-s_t} \le 8 \eta(T)$
  \begin{align}\label{eq:last_stage}
      x_{a_t^*}^{\top}\theta^{*} - x_{a_t}^{\top}\theta^{*} \le 32 \eta(T)
  \end{align}
  For stage $s_{t} < \ceil{\log \frac{1}{8 \eta(T)}}$, since the action $a_{t}$ is the result of querying $\Mcal_{s_{t}}$, we have
  \begin{align*}
    & \beta(\delta/2)^{2}\norm{x_{a_t}}_{V_{t}^{-1}}^{2} \ge \frac{1}{4}\cdot 2^{-2s_{t}-2} - \frac{5}{4}\eta(T)^{2} \implies \beta(\delta/2)\norm{x_{a_t}}_{V_{t}^{-1}} \ge 2^{-s_{t}-2}-2\eta(T)
  ,\end{align*}
  where we used the fact that $2^{-s_t-2} \ge 2\eta(T)$ and $\sqrt{a - b}\ge \sqrt{a} - \sqrt{b}$ for all $a \ge b$.
  Combining the results, we have
  \begin{align}\label{eq:previous_stage}
    x_{a_t^*}^{\top}\theta^{*} - x_{a_{t}}^{\top}\theta^{*} \le 4\cdot2^{-s_{t}} \le 16\beta(\delta/2)\norm{x_{a_{t}}}_{V_{t}^{-1}} + {32}\eta(T)
  .\end{align}
  Combining \Cref{eq:last_stage,eq:previous_stage} and summing over $t$, we have
  \begin{align*}
    R(T) & \le 16\beta(\delta/2)\sum_{t = 1}^{T}\norm{x_{a_{t}}}_{V_{t}^{-1}} + 32\eta(T)T + 32\eta(T)T \\
         & \le 16\beta(\delta/2) \sqrt{T d \log \rbr{1 + \frac{T}{d}}} + 64\eta(T) T
  .\end{align*}
  The second inequality is by \Cref{lemma:bound_x}. This completes the proof.

\end{proof}

\subsection{Proof of \Cref{thm:acc-linTS}}

We first state the formal version of \Cref{thm:acc-linTS}.

\begin{theorem}[Formal version of \Cref{thm:acc-linTS}]\label{thm:formal_acc-linTS}
  For any $\delta \in (0, 1)$, with probability at least $1-\delta$, the regret of \Cref{alg:accelerated-linTS} is bounded by
  \begin{align*}
  R(T) \le & \frac{4\gamma(\delta/4T)}{p}\rbr{\sqrt{2Td\log\rbr{1+\frac{T}{d}}} + \sqrt{8T \log\frac{4}{\delta}} }  \\
  & + \rbr{\gamma(\delta/4T) + \beta(\delta/4T)}\sqrt{2Td\log\rbr{1 + \frac{T}{d}}} \\
           & + \frac{6(1 + \gamma(\delta/4T) + \beta(\delta/4T))}{p}\cdot\frac{\eta(T)}{d}\cdot T
  ,\end{align*}
  where $\beta(\delta/4T) = 1 + \sqrt{2\log\frac{4T}{\delta} + d\log\rbr{1 + \frac{T}{d}}}$, $\gamma(\delta/4T) = \beta(\delta/4T)\sqrt{bd\log \frac{b'd}{\delta/4T}}$, with $b, b', p$ are constants defined in \Cref{def:D-TS}. $\eta(T) \in (0, 1)$ controls the approximate MIPS accuracy.

  The per-step time complexity is $K^{1 - \Theta(\eta(T)^2) + o(\log^{-0.45}K)}$. The time complexity of the data structure maintenance (line 4) is ${K^{1+o(1)}}$ which is paid once at initialization.
\end{theorem}

\paragraph{Proof of \Cref{thm:formal_acc-linTS} - time complexity}

We first show the per-step complexity. The per-step time complexity is $\kappa K^{\rho_q + o(\log^{-0.45}K)} \log\frac{d}{\eta(T)}$, as each query to \Cref{alg:mips-adaptive} has complexity $\kappa K^{\rho_q + o(\log^{-0.45}K)}$ and line 8 of \Cref{alg:accelerated-linTS} requires a binary search which induces another factor of $\log \frac{d}{\eta(T)}$. By setting $(c, r, \epsilon) = (1 - \frac{1}{i+1}, \frac{i\cdot\eta(T)}{d}, \frac{\eta(T)}{d})$, we have $\kappa = O(\log KT) = K^{O(\frac{1}{\sqrt{\log K}})}$ and $\rho = \frac{4c'^2}{(1+c'^2)^2}$, with $c' = 1 + \Theta(\eta(T))$ (see \Cref{thm:near-linear-MIPS}). It then implies $\rho_q = 1- \Theta(\eta(T)^2)$, which corresponds to the per-step time complexity in \Cref{thm:acc-linTS}. Notice that for Line 6 in \Cref{alg:accelerated-linTS}, adding new arms to and deleting arms from all $\Mcal_i$ takes at most $C_{change}K^{o(1)} \ceil{\frac{d}{\eta(t)}}$ time, which is negligible comparing with $K^{1 -\Theta(\eta(T)^2) + o(\log^{-0.45}K)}$.

Next we prove the preprocessing time complexity. By \Cref{thm:near-linear-MIPS}, the preprocessing time complexity is $\kappa K^{1 + o(1)}$. With $\kappa = K^{o(1)}\log T$, the complexity becomes $K^{1+o(1)}\cdot\log T$. Note that in line 5, $\ceil{\frac{d}{\eta(T)}}$ copies of \Cref{alg:mips-adaptive} are constructed. Therefore the preprocessing time complexity is $\frac{d K^{1 + o(1)}\log T}{\eta(T)}$ and the per-step complexity is $K^{1 - \Theta(\eta(T)^2) + o(\log^{-0.45}K)} \cdot \log T \cdot \log \frac{d}{\eta(T)}$, which can be further simplified as $K^{1 + o(1)}$ preprocessing complexity and $K^{1 - \Theta(\eta(T)^2) + o(\log^{-0.45}K)}$ per-step complexity.

\paragraph{Proof of \Cref{thm:formal_acc-linTS} - regret bound}

By setting the MIPS solver $\Mcal$'s success probability to be at least $1 - \frac{\delta \cdot \eta(T)}{2d}$, we have the all queries (line 9 of \Cref{alg:accelerated-linTS}) are answered correctly with probability at least $1 - \frac{\delta}{2}$.Further, note that with setting of $\gamma\rbr{\delta/4T}, \beta(\delta/4T)$, with probability at least $1-\frac{\delta}{2}$, for all $t \le T$, we have
\begin{align*}
  \norm{\widehat \theta_{t} - \theta^{*}}_{V_{t}} \le \beta(\delta/4T),\quad \norm{\widetilde \theta_{t} - \widehat \theta_{t}}_{V_{t}} \le \gamma(\delta/4T)
,\end{align*}
with the first inequality comes from \Cref{lemma:bound_x}, and the second one follows from the \textit{concentration} part of \Cref{def:D-TS}. The rest of the proof only considers the case when the events above hold, which happens with probability at least $1 - \delta$.

The regret analysis is similar to the one in \citep{abeille2017linear}. We start with the regret decomposition
\begin{align*}
  R(T) = \underbrace{\sum_{t=1}^{T}\rbr{x_{a_t^*}^{\top} \theta^{*} - x_{a_{t}}^{\top}\widetilde \theta_{t}}}_{R^{TS}(T)} + \underbrace{\sum_{t=1}^{T}\rbr{x_{a_{t}}^{\top}\widetilde \theta_{t} - x_{a_{t}}^{\top}\theta^{*}} 
}_{R^{RLS}(T)}
,\end{align*}
where the $R^{RLS}(T)$ is the regret induced by the ``regularized least square" estimation, and $R^{TS}(T)$ measures the regret of making decision based on the $\widetilde \theta_{t}$ drawn by TS.

\textbf{Bounding $R^{RLS}(T)$.}
\begin{align*}
  R^{RLS}(T) & \le \sum_{t=1}^{T}\abs{x_{a_{t}}^{\top}\rbr{\widetilde \theta_{t} - \widehat \theta_{t}}} + \sum_{t=1}^{T}\abs{x_{a_{t}}^{\top}\rbr{\widehat \theta_{t} - \theta^{*}}} \\
             & \le \sum_{t=1}^{T}\norm{x_{a_{t}}}_{V_{t}^{-1}}\rbr{\norm{\widetilde \theta_{t} - \widehat \theta_{t}}_{V_{t}} + \norm{\widehat \theta_{t} - \theta^{*}}_{V_{t}}} \\
             & \le \rbr{\gamma(\delta/4T) + \beta(\delta/4T)} \sum_{t=1}^{T}\norm{x_{a_{t}}}_{V_{t}^{-1}} \\
             & \le \rbr{\gamma(\delta/4T) + \beta(\delta/4T)} \sqrt{2Td\log \rbr{1 + \frac{T}{d}}}
.\end{align*}
The last inequality follows from \Cref{lemma:bound_x}.

\textbf{Bounding $R^{TS}(T)$.}
At time $t$, denote $J_t(\theta) \coloneqq \max_{a\in\Acal}x_{a}^{\top}\theta$. Suppose \Cref{alg:accelerated-linTS} selects $a_{t}$. Define $\Delta_{t} \coloneqq J_t(\widetilde \theta_{t}) - x_{a_{t}}^{\top}\widetilde \theta_{t}$, which is the approximation error solving MIPS approximately for $\widetilde \theta_{t}$. Denote $R_{t}^{TS} = x_{a_t^*}^{\top}\theta^{*} - x_{a_{t}}^{\top}\widetilde \theta_{t}$, we have
\begin{align*}
  R_{t}^{TS} \le J_t(\theta^{*}) - J_t(\widetilde \theta_{t}) + \Delta_{t}
.\end{align*}
Define $\Ccal_{t} = \cbr{\theta~|~\norm{\theta - \widehat \theta_{t}}_{V_{t}} \le \gamma(\delta/4T)}$, which implies $\widetilde \theta_{t} \in \Ccal_{t}$ for all $t \in [T]$. Then
\begin{align*}
  R_{t}^{TS} \le J_t(\theta^{*}) - \inf_{\theta\in\Ccal_{t}}J_t(\theta) + \Delta_{t}
.\end{align*}
We denote $\widetilde \theta_{t}$ is \textit{optimistic} if $J_t(\widetilde \theta_{t}) \ge J_t(\theta^{*})$. For any step $t$, $\widetilde \theta_{t}$ is optimistic with probability at least $p/2$, where $p$ is defined in \Cref{def:D-TS} (see Lemma 3 of \citep{abeille2017linear}). Condition on $\widetilde \theta_{t}$ being optimistic, we have
\begin{align*}
  R_{t}^{TS} & \le J_t(\widetilde \theta_{t}) - \inf_{\theta\in\Ccal_{t}}J_t(\theta) + \Delta_{t} \\
             & \le x_{a_{t}}^{\top}\widetilde \theta_{t} - \inf_{\theta\in\Ccal_{t}}\max_{a\in\Acal}x_{a}^{\top}\theta + 2\Delta_{t} \\
             & \le x_{a_{t}}^{\top}\widetilde \theta_{t} - \inf_{\theta\in\Ccal_{t}}x_{a_{t}}^{\top}\theta + 2\Delta_{t} \\
             & \le \sup_{\theta\in\Ccal_{t}}\norm{x_{a_{t}}}_{V_{t}^{-1}}\norm{\widetilde \theta_{t} - \theta}_{V_{t}} + 2\Delta_{t} \\
             & \le 2\gamma(\delta/4T) \norm{x_{a_{t}}}_{V_{t}^{-1}} + 2\Delta_{t}
.\end{align*}
Note that the right-hand side is always positive, taking expectation with regard to $\widetilde \theta_{t}$ we have
\begin{align*}
  R_{t}^{TS} &\le \EE_{\widetilde \theta_{t}}\sbr{2\gamma(\delta/4T) \norm{x_{a_{t}}}_{V_{t}^{-1}} + 2\Delta_{t}\Bigm\vert \widetilde \theta_{t} \text{ is optimistic }} \\
             &\le \frac{2}{p}\EE_{\widetilde \theta_{t}}\sbr{2\gamma(\delta/4T) \norm{x_{a_{t}}}_{V_{t}^{-1}} + 2\Delta_{t}} 
.\end{align*}
Next we proceed to bound $\Delta_t$. Note that as $\norm{\theta^*}_2 \le 1$, $\norm{\widehat \theta_{t} - \theta^{*}}_{V_{t}} \le \beta(\delta/4T),\norm{\widetilde \theta_{t} - \widehat \theta_{t}}_{V_{t}} \le \gamma(\delta/4T)$. For all $t\in[T]$, the multiplicative error (introduced by MIPS according to the parameter in \Cref{alg:accelerated-linTS}, line 9) for $J_t(\widetilde \theta_t)$ is at most $(1 + \beta(\delta/4T) + \gamma(\delta/4T))\frac{\eta(T)}{d}$ and the additive error $\epsilon$ induces another $2(1+\beta(\delta/4T) + \gamma(\delta/4T))\frac{\eta(T)}{d}$ approximation error.

Therefore for all $t \in [T]$, we have
\begin{align*}
  \Delta_{t} \le 3\rbr{1 + \gamma(\delta/4T) + \beta(\delta/4T)}\frac{\eta(T)}{d}
.\end{align*}
It thus implies
\begin{align*}
  R^{TS}(T) \le & \frac{4\gamma(\delta/4T)}{p}\sum_{t=1}^{T}\EE_{\widetilde \theta_{t}}\sbr{\norm{x_{a_{t}}}_{V_{t}^{-1}}} \\
                & + \frac{6\rbr{1 + \gamma(\delta/4T) + \beta(\delta/4T)}}{p}\eta(T) T\\
  \le & \frac{4\gamma(\delta/4T)}{p}\rbr{\sqrt{2Td\log\rbr{1+\frac{T}{d}}} + \sqrt{8T \log\frac{4}{\delta}} } \\
                & + \frac{6\rbr{1 + \gamma(\delta/4T) + \beta(\delta/4T)}}{p}\cdot\frac{\eta(T)}{d}\cdot T
.\end{align*}
The second inequality follows from Azuma's inequality on bounding the difference between $\sum_{t=1}^{T}\EE_{\widetilde \theta_{t}}\sbr{\norm{x_{a_{t}}}_{V_{t}^{-1}}}$ and $\sum_{t=1}^{T}\norm{x_{a_{t}}}_{V_{t}^{-1}}$. Combining the bound for $R^{RLS}(T)$ and $R^{TS}(T)$ completes the proof.

\section{Deferred Experiment Results}\label{apdx:exp}

Here we present the deferred experiment results.

\paragraph{Synthetic Experiment - Addition and Deletion} We empirically evaluate the performance of Sub-TS when there are both arm additions to and deletions from $\Acal$. The environment is set as specified in \Cref{sec:experiment}. Further, we set the number of arms $K$ to be 10,000. For every 20 time steps, there are $2$ arms newly generated from the unit spherical Gaussian distribution, and $2$ random arms in $\Acal$ get deleted. The time horizon is set to 20,000 and the results are in \Cref{table:addition_deletion}.

\begin{table}[!h]
\centering
\begin{tabular}{@{}c|cccc@{}}
\toprule
Algorithm & Linear TS & Sub-TS, shortlist 10  & Sub-TS, shortlist 30  & Sub-TS, shortlist 100  \\ \midrule
Regret    & $612 \pm 43$    & $640 \pm 42$   &  $612 \pm 43$  &  $612 \pm 43$      \\
Time(s)   & $44.80$ &  $25.49~(25.23)$  &  $26.37~(26.11)$ &  $29.69~(29.43)$    \\
Speedup  & $\times 1$   &  $\bf{\times 1.75~(\times 1.77)}$  &  $\bm{\times 1.70~(\times 1.72)}$ &  $\bm{\times 1.51~(\times 1.52)}$ \\ \bottomrule
\end{tabular}
\caption{\textbf{Synthetic Experiment - Addition and Deletion.} The algorithms and ``Regret", ``Time" and ``Speedup" are defined the same as in \Cref{table:different_K}. We see that the Sub-TS is able to handle arms' changing, including both additions and deletions, and delivers around 1.51 -- 1.77 times speedup.}
\label{table:addition_deletion}
\end{table}

\paragraph{Synthetic Experiment - Impact of $C_{change}$} Here we empirically evaluate the impact of different numbers of arms' changing. The environment is set as specified in \Cref{sec:experiment}. Further, we set the initial number of arms to be 10,000. For every 20 time steps, there are $C_{change}$ arms newly generated from the Gaussian distribution and included into $\Acal$. The time horizon is set to 20,000 and the results are in \Cref{table:different_C_change}.

\begin{table}[!h]
\centering
\begin{tabular}{@{}cc|cc|cc@{}}
\toprule
                                                        &          & Linear Elim & Sub-Elim, shortlist 30 & Linear TS & Sub-TS, shortlist 30 \\ \midrule
\multicolumn{1}{c|}{\multirow{3}{*}{$C_{change}=2$}}   & Regret   &          $4433 \pm 399$   &          $4393 \pm 392$             &     $566 \pm 52$     &     $566 \pm 52$    \\
\multicolumn{1}{c|}{}                                   & Time (s) &      $35.72$       &   $2.85~(2.10)$      &      $45.54$    &   $21.01~(20.76)$  \\
\multicolumn{1}{c|}{}                                   & Speedup &       $\times 1$      &    $\bm{\times 12.53~(\times17.01)}$   &  $\times 1$  &   $\bm{\times 2.17~(\times 2.19)}$  \\ \midrule

\multicolumn{1}{c|}{\multirow{3}{*}{$C_{change}=10$}}  & Regret   &  $4428 \pm 224$  &  $4345 \pm 244$   &  $639 \pm 54$  &  $638 \pm 54$    \\
\multicolumn{1}{c|}{}                                   & Time (s) &  $36.22$ &   $3.04~(2.30)$   &   $55.99$  &   $24.91~(24.65)$     \\
\multicolumn{1}{c|}{}                                   & Speed-up &  $\times 1$   &  $\bm{\times 11.91~(\times 15.75)}$    & $\times 1$  &  $\bm{\times 2.25~(2.27)}$   \\ \midrule
\multicolumn{1}{c|}{\multirow{3}{*}{$C_{change}=50$}} & Regret   &  $4106 \pm 154$   &      $4062 \pm 169$     &  $581 \pm 45$   &  $619 \pm 62$   \\
\multicolumn{1}{c|}{}                                   & Time (s) &  $36.59$   &    $3.94~(3.20)$    &   $106.96$    &   $48.87~(48.62)$      \\
\multicolumn{1}{c|}{}                                   & Speedup &     $\times 1$        &    $\bm{\times 9.28~(\times 11.43)}$    &   $\times 1$    &  $\bm{\times 2.19~(\times 2.20)}$     \\ \bottomrule
\end{tabular}
\vspace{+0.5em}
\caption{\textbf{Synthetic Experiment - Impact of Different $C_{change}$.} The algorithms and ``Regret", ``Time" and ``Speedup" are defined the same as in \Cref{table:different_K}. Notice that Linear Elim and Sub-Elim are not much affected by $C_{change}$, as they will have already removed many arms in the later stages, and therefore the newly added arms do not affect the running time by much. The running time of Linear TS and Sub-TS, however, is significantly affected by $C_{change}$, as they are running on an increasingly large arm set. The  Despite the impact on their individually running time, our algorithms are shown to deliver stable speedup in all evaluated settings.}
\label{table:different_C_change}
\vspace{-1em}
\end{table}

\end{document}